\newcommand{\blind}{0}
\newcommand{\gt}{\tau}
\newcommand{\D}{\Delta}
\newcommand{\tr}{\textup{trace}}
\newcommand{\bl}{{\bf L}}
\newcommand{\bX}{{\bf X}}
\newcommand{\bG}{{\bf G}}
\newcommand{\bW}{{\bf W}}
\newcommand{\by}{{\bf y}}
\newcommand{\bZ}{{\bf Z}}
\newtheorem{theorem}{Theorem}
\newtheorem{proposition}[theorem]{Proposition}
\newtheorem{corollary}[theorem]{Corollary}
\newtheorem{remark}[theorem]{Remark}
\begin{document}

\def\spacingset#1{\renewcommand{\baselinestretch}%
{#1}\small\normalsize} \spacingset{1}


\if0\blind
{
  \title{\bf Fast Embedding for JOFC Using the Raw Stress Criterion}
  \author{Vince Lyzinski\thanks{
    The authors gratefully acknowledge support from 
the XDATA program of the Defense Advanced
Research Projects Agency (DARPA) administered
through Air Force Research Laboratory contract FA8750-12-2-0303,
and the NSF BRAIN Early Concept Grants for Exploratory Research (EAGER) award DBI-1451081.}\hspace{.2cm}\\
    Johns Hopkins University Human Language Technology Center of Excellence,\\
    Youngser Park\\
    Center for Imaging Sciences, Johns Hopkins University,\\
    Carey E. Priebe\\
    Department of Applied Mathematics and Statistics, Johns Hopkins University,\\
    and\\
    Michael Trosset\\
    Department of Statistics, Indiana University}
  \maketitle
} \fi

\if1\blind
{
  \bigskip
  \bigskip
  \bigskip
  \begin{center}
    {\LARGE\bf Fast Embedding for JOFC Using the Raw Stress Criterion}
\end{center}
  \medskip
} \fi

\bigskip
\begin{abstract}
The Joint Optimization of Fidelity and Commensurability (JOFC)
manifold matching methodology
embeds an omnibus dissimilarity matrix consisting of
multiple dissimilarities on the same set of objects.
One approach to this embedding
optimizes the preservation of fidelity to each individual dissimilarity matrix
together with commensurability of each given observation across modalities
via iterative majorization of a raw stress error criterion by successive Guttman transforms. 
In this paper, we exploit the special structure inherent to JOFC 
to exactly and efficiently compute the successive Guttman transforms, 
and as a result we are able to greatly speed up the JOFC procedure for both in-sample and out-of-sample embedding.  
We demonstrate the scalability of our implementation on both real and simulated data examples.
\end{abstract}

\noindent%
{\it Keywords:}  Multidimensional Scaling, manifold matching, distance geometry, distance matrices
\vfill

\newpage
\spacingset{1.45} 
\section{Introduction and Background}
\label{S:intro}

Manifold matching---embedding multiple modality data sets into a common low-dimensional space wherein joint inference can be investigated---is an important inference task in statistical pattern recognition, with applications in computer vision \citep[see, for example,][]{MMCV1,MMCV2,MMCV3,MMCV4,ham2003learning}, text and language processing \citep[see, for example,][]{MMHLT1,MMHLT2,MMHLT3}, and machine learning \citep[see, for example,][]{MMML1,MMML2,MMML3,ham2005semisupervised}, to name a few; for a survey of the literature on manifold matching and the broader problem of transfer learning, see \cite{pan2010survey}.

In the present manifold matching framework, we consider $n$ objects, each measured under $m$ disparate modalities or conditions, each modality yielding an object-wise dissimilarity matrix $\{\D_i\}_{i=1}^m$; thus $\D_1,\D_2,\ldots,\D_m\in\mathbb{R}_+^{n\times n}$.  
The Joint Optimization of Fidelity and Commensurability (JOFC) algorithm of \cite{JOFC} is a manifold matching procedure that simultaneously embeds these $mn$ data points ($n$ objects in $m$ modalities) into a common Euclidean space by embedding an omnibus dissimilarity matrix ${\bf \Delta}$ which encapsulates the information contained in the dissimilarities $\{\D_i\}_{i=1}^m$.
The JOFC algorithm has proven to be a flexible and effective manifold matching algorithm, with numerous applications and extensions in the literature; see \cite{ma2012fusion,sunpriebe2013,sgmjofc,adali2013fidelity,shen2014manifold}.
One approach to this embedding
optimizes the preservation of fidelity to each individual dissimilarity matrix (i.e., preserving the within modality dissimilarities)
together with the commensurability of the observations across modalities (i.e., preserving the cross-modality matchedness of the data).  
This approach embeds ${\bf \D}$ by minimizing Kruskal's raw stress criterion for metric multidimensional scaling (MDS) via successive Guttman transforms \citep{borg2005modern};
see Algorithm \ref{alg:1}.  

In this paper, we exploit the special structure of the JOFC weight matrix to exactly and efficiently compute these successive Guttman transforms.  
Employing this and further computational simplifications, we are able to dramatically speed the JOFC procedure (see Algorithm \ref{alg:2}) and extend this speedup to out-of-sample embedding for JOFC.  
In addition, parallelizing the resulting algorithm---see Remark \ref{rem:parfjofc}---is immediate.
We demonstrate these speedups and the utility of the JOFC framework in real and synthetic data examples.

\noindent{\bf Notation:} To aid the reader, we have collected the frequently used notation introduced in this manuscript into a table for ease of reference; see Table \ref{table:not}.
\begin{table}[t!]
\hspace{-10mm}
\begin{tabular}{| c|c |c |}
\hline
   Notation & Description &  Reference\\
  \hline\hline
  $J_n$& The $n\times n$ matrix of all $1$'s & Used throughout\\
  \hline
  $I_n$& The $n\times n$ identity matrix & Used throughout\\
  \hline
  \bf X & Final configuration obtained via 3-RSMDS JOFC and fJOFC & Sec. \ref{S:3SMDS}, \ref{S:JOFC} and \ref{sec:fastJOFC} \\
  \hline
   $\tilde\sigma(\bf X)$ & The raw stress objective fcn. of 3-RSMDS & Eq. (\ref{eq:3waystress}) \\
   \hline
   $\widetilde{\bf \Delta}$ & The omnibus dissimilarity embedded by 3-RSMDS & Eq. (\ref{eq:rsdelta})\\
   \hline
  $\sigma(\bf X)$ & The raw stress objective fcn. of JOFC and fJOFC & Eq. (\ref{stress}) \\
  \hline
   ${\bf \Delta}$ & The omnibus dissimilarity embedded by JOFC and fJOFC & Eq. (\ref{eq:jofcdelta})\\
  \hline
  $\bf W$ & The weight matrix used in the JOFC and fJOFC embeddings  & Eq. (\ref{eq:W})\\
  \hline
    $\bf L$ & The combinatorial Laplacian of $\bf W$  & Sec. \ref{S:JOFC} and \ref{sec:fastJOFC}\\
    \hline
    $B({\bf X})$& The $B$-matrix used in the JOFC Guttman transform updates & Eq. (\ref{eq:B})\\
  \hline
  $\bf L^\dagger$& The Moore-Penrose pseudoinverse of $\bf L$& Sec. \ref{S:JOFC} and \ref{sec:fastJOFC}\\
   \hline
   $\mathcal W$ & A modified weight matrix used in the computation of $\bf L^\dagger$  & Eq. (\ref{eq:modW})\\
    \hline
   ${\bf \Delta}^{(o)}$ & The out-of-sample omnibus dissimilarity embedded by fJOFC & Sec. \ref{S:foos}\\
   \hline
      $\sigma_{\bf X}({\bf y})$ & The out-of-sample raw stress criterion & Sec. \ref{S:foos}\\
      \hline
         ${\bf W}^{(o)}$ & The out-of-sample weight matrix used in the fJOFC embedding & Sec. \ref{S:foos}\\
   \hline
         ${\bf L}^{(o)}$ & The combinatorial Laplacian of ${\bf W}^{(o)}$ & Sec. \ref{S:foos}\\
         \hline
\end{tabular}
\caption{Table of relevant notation.}
\label{table:not}
\end{table}

\subsection{JOFC and Three-Way Raw Stress MDS}
\label{S:background}
In the JOFC framework, we use Raw Stress MDS to simultaneously embed the $m$ object-wise dissimilarity matrices $\D_1,\D_2,\ldots,\D_m\in\mathbb{R}^{n\times n}_+$ while preserving both the matchedness of the objects across modality and the within modality dissimilarities.
In this way, JOFC is closely related to Three-Way Raw Stress MDS (3-RSMDS).
The key difference is that the cross modality matchedness of the objects in 3-RSMDS is enforced via a constraint on the feasible region,
while in JOFC the matchedness is enforced by adding a suitable term into the raw stress criterion. 
In that light, 
JOFC can be viewed as a softly constrained version of 3-RSMDS.
We highlight the commonalities and differences between the two approaches below, and in Section \ref{S:3J} empirically compare their respective performances in an illustrative simulation.  
For further discussion of the connection between JOFC and Three-Way Nonmetric MDS in the context of hypothesis testing, see \cite{castle2012quasi}, Chapter 8.

\begin{remark}
\emph{While the JOFC algorithm is closely related to 3-RSMDS, it bears mentioning the relationship of the algorithm to other existing manifold alignment procedures.
Many existing algorithms begin with a set of high-dimensional points sampled or observed from manifolds in $\mathbb{R}^k$; see, for example, \cite{ham2005semisupervised,MMCV4,sharma2012generalized}.  
Dimension reduction techniques are then applied jointly to the observations to align the manifolds in a common $d$-dimensional embedding space with $d\ll k$.
In JOFC---similar to many of MDS and kernel based methods; see, for example, \cite{leeuw2008multidimensional, MMML1,shen2014manifold}---often the objects' measurements cannot be made in Euclidean space.  For example, the views of a single object may represent the i.\@ text content, ii.\@ images, iii.\@ communication activity associated with a single social media profile.
While these data are non-Euclidean by nature, nonetheless there are well established dissimilarities that can be computed within each modality.
Indeed, the only requirement in the JOFC framework is that we can compute dissimilarities amongst the data points} within \emph{each modality.
}
\end{remark}

\subsection{Three-Way Raw Stress MDS }
\label{S:3SMDS}
In both the 3-RSMDS and the JOFC frameworks, we seek to simultaneously embed the $m$ object-wise dissimilarity matrices, and in both regimes, the $m$ dissimilarities are measured between the same $n$ objects; i.e., they are produced by repeated measurements or observations under potentially disparate modalities.
Assuming that the entire cross-modality correspondence is known {\it a priori} between the $n$ objects, {\it Three-Way Raw Stress Multidimensional Scaling} (3-RSMDS) seeks to find a configuration $${\bf X}^\top=\left[
({\bf X}^{(1)})^\top| 
({\bf X}^{(2)})^\top|
\cdots|
({\bf X}^{(m)})^\top\right]\in\mathbb{R}^{mn\times d},
$$ of the $mn$ points that minimizes the raw stress criterion,
\begin{equation}
\label{eq:3waystress}
\tilde\sigma(\bX)=\sum_{i=1}^m \sum_{j<k}\left([\Delta_i]_{j,k}-d_{j,k}({\bf X}^{(i)})\right)^2,
\end{equation}
subject to the constraint that ${\bf X}^{(i)}={\bf GW}^{(i)}$ for all $i\in[m]:=\{1,2,\ldots,m\}$ (note that to remove nonidentifiability issues, $\bG$ is often constrained to satisfy $\bG \bG^\top=I_{n}$). 
In (\ref{eq:3waystress}), for ${\bf M}\in\mathbb{R}^{k\times \ell}$, $d_{i,j}({\bf M})$ is the Euclidean distance between the $i$-th and $j$-th rows of ${\bf M}$, 
and for $i\in[n]$, $\bX^{(i)}$ are the embedded points in $\mathbb{R}^d$ corresponding to $\Delta_i$.
Adopting the terminology in \cite{borg2005modern}, in the dimension-weighting 3-RSMDS model, ${\bf G}$ is known as the {\it group stimulus space}, and the ${\bf W}^{(i)}$ are diagonal matrices with nonnegative diagonal entries.  
In this model, the individual embeddings ${\bf X}^{(i)}$ differ only in the (potentially different) weights--- given by the diagonal entries of the respective ${\bf W}^{(i)}$'s---they place on the dimensions of ${\bf G}$.  

The 3-RSMDS dimension weighting model and its variants have been well-studied in the literature; see, for example, \cite{indscal,idioscal,idio3,de1980multidimensional,heiser1988proxscal,harshman1984parafac}.  
Indeed, there are a number of proposed procedures in the literature for solving the Three-Way MDS problem under a variety of error criterion, including the INDSCAL algorithm of \cite{indscal}; the IDIOSCAL algorithm of \cite{idioscal,idio3}; the PROXSCAL algorithm of \cite{heiser1988proxscal}; and the PARAFAC algorithm of \cite{harshman1984parafac}; among numerous others.
We note here that minimizing (\ref{eq:3waystress}) subject to the constraint ${\bf X}^{(i)}={\bf GW}^{(i)}$ for all $i\in\{1,2,\ldots,m\}$ is equivalent to performing constrained Raw Stress MDS on the dissimilarity matrix
\begin{equation}
\label{eq:rsdelta}
\widetilde{\bf \Delta}=\begin{bmatrix}
\D_1&\text{NA}&\cdots &\text{NA} \\
\text{NA}&\D_2&\cdots &\text{NA} \\
\vdots & \vdots & \ddots & \vdots\\
\text{NA}&\text{NA}&\cdots &\D_m
\end{bmatrix}\in\mathbb{R}^{mn\times mn}
\end{equation}
 with configuration matrix 
$${\bf X}^\top=\left[
({\bf X}^{(1)})^\top| 
({\bf X}^{(2)})^\top|
\cdots|
({\bf X}^{(m)})^\top\right]\in\mathbb{R}^{mn\times d},
$$
subject to ${\bf X}^{(i)}={\bf GW}^{(i)}$ for all $i\in[m]$.  
The ``NA'' entries in ${\bf \Delta}$ represent the reality that the dissimilarities across modalities are unknown a priori.
This is accounted for in the objective function by zeroing out the contribution to the stress associated with these entries of $\widetilde{\bf \Delta}$. 
the weight matrix $\bf W$ is structured to zero out the missing data entries of $\eta$ in the objective function $\sigma(\cdot).$.
The constrained MDS iterative majorization algorithm of \cite{de1980multidimensional} can then be applied to approximately solve the 3-RSMDS model.  
As the JOFC procedure (see Algorithm \ref{alg:1}) and the accelerated fJOFC procedure (see Algorithm \ref{alg:2}) are both iterative majorization MDS procedures, we will provide the details of \cite{de1980multidimensional} applied to 3-RSMDS for the sake of comparison.  The procedure of \cite{de1980multidimensional} consists of the following two iterated steps, given an initialization of the configuration $\bX_{(0)}$:
\begin{itemize}
\item[1.]  At configuration $\bX_{(t-1)}$, ignoring the constraint that ${\bf X}^{(i)}={\bf GW}^{(i)}$ for all $i\in[m]$, compute the unconstrained update $\widetilde{\bf X}_{(t)}$ via the Guttman transform; see \cite{borg2005modern}.
\item[2.]  Set 
$\bX_{(t)}=\left[
\big({\bf X}^{(1)}_{(t)}\big)^\top| 
\big({\bf X}^{(2)}_{(t)}\big)^\top|
\cdots|
\big({\bf X}^{(m)}_{(t)}\big)^\top\right]$ to be the minimizer of
$$\tr(\bX-\widetilde\bX_{(t)})^{\top}\widetilde{\bf L}(\bX-\widetilde\bX_{(t)}),$$
over $\bX$ subject to the constraints ${\bf X}^{(i)}={\bf GW}^{(i)}$ for all $i\in\{1,2,\ldots,m\}$.  Here, $\widetilde{\bf L}\in\mathbb{R}^{mn\times mn}$ is the block diagonal matrix with $nI_n-J_n\in\mathbb{R}^{n\times n}$ in each of the $m$ diagonal blocks, where $J_n={\bf 1}_n{\bf 1}_n^T\in\mathbb{R}^{n\times n}$, and ${\bf 1}_n$ is the column vector of all one's in $\mathbb{R}^n$.  
This minimization is often approached by alternating minimizing over $\bG$ for a fixed $\bW$ and then minimizing over $\bW$ for a fixed $\bG$.
\end{itemize}

\subsubsection{The JOFC framework}
\label{S:JOFC}
In the above 3-RSMDS framework, the matchedness of the $n$ observations across the $m$ dissimilarities is enforced via the ${\bf X}^{(i)}={\bf GW}^{(i)}$ constraints.
In the JOFC algorithm, the matchedness constraint is built into the objective function as follows.
Contrasting the raw stress criterion in (\ref{eq:3waystress}), the variant of JOFC we consider seeks to produce an unconstrained configuration ${\bf X}^\top=\left[
({\bf X}^{(1)})^\top| 
({\bf X}^{(2)})^\top|
\cdots|
({\bf X}^{(m)})^\top\right]\in\mathbb{R}^{mn\times d},$
(where $({\bf X}^{(i)})^\top=\left[
({ X}^{(i)}_1)^\top| 
({ X}^{(i)}_2)^\top|
\cdots|
({X}_m^{(i)})^\top\right]\in\mathbb{R}^{n\times d},
$ are the points associated with $\Delta_i$)
that minimizes the raw stress criterion
\begin{align}
\label{stress}
\sigma({\bf X})=\underbrace{\sum_{i=1}^m \sum_{1\leq j <\, \ell\leq n}\left([\Delta_i]_{j,\ell}-d_{j,\ell}({\bf X}^{(i)})\right)^2}_{\text{fidelity}}  + w\underbrace{\sum_{1\leq i < j\leq m}\sum_{\ell=1}^n d(X^{(i)}_\ell,X^{(j)}_\ell)^2}_{\text{commensurability}},
\end{align}
where $d(\cdot,\cdot)$ is the Euclidean distance function. 
The raw stress criterion in JOFC is composed of three major pieces: 
\begin{itemize}
\item[1.] The ``fidelity'' term, $\sum_{i=1}^m \sum_{1\leq j <\, \ell\leq n}\left([\Delta_i]_{j,\ell}-d_{j,\ell}({\bf X}^{(i)})\right)^2$, which measures the faithfulness of the embedding to the original dissimilarities, $\{\Delta_i\}_{i=1}^m$.
Note the the fidelity is equal to the raw stress criterion in 3-RSMDS (\ref{eq:3waystress}).
\item[2.] The ``commensurability'' term, $\sum_{1\leq i <j\leq m}\sum_{\ell=1}^n d(X^{(i)}_\ell,X^{(j)}_\ell)^2$,
which measures how the geometry of the embeddings differs across modality.  Similar to the role of the ${\bf X}^{(i)}={\bf GW}^{(i)}$ constraints in 3-RSMDS, in JOFC the commensurability term (softly) enforces the matchedness of the $n$ data points across the $m$ modalities.
We also note that the commensurability is proportional to the objective function of three-way Procrustes analysis
\begin{align}
\label{eq:comm}
\text{commensurability}&=\sum_{i<j}^m \tr({\bf X}^{(i)}-{\bf X}^{(j)})^\top({\bf X}^{(i)}-{\bf X}^{(j)})\notag\\
&=m\sum_{i=1}^m \tr({\bf X}^{(i)}-\bar{\bf X})^\top({\bf X}^{(i)}-\bar{\bf X}),
\end{align}
where $\bar{\bf X}=m^{-1}\sum_{i=1}^m \bX^{(i)}.$
\item[3.] The weighting of the fidelity versus the commensurability of the embedding provided by $w$.  
If $w\ll 1$, then the optimal embedding will preserve the within-modality dissimilarities at the expense of the cross-modality correspondence; i.e. each $\Delta_i$ will be fit separately.  If $w\gg 1$, then the optimal embedding will preserve the cross-modality correspondence at the expense of the within-modality dissimilarities; i.e. from Eq (\ref{eq:comm}) we see that $w\gg 1$ would force all of the $\bX^{(i)}$ to be equal without concern for preserving the original $\Delta_i$'s.  
In light of this, JOFC can be viewed as weakly constrained Raw Stress MDS (see \cite{borg2005modern} for detail), with $w$ allowing us to continuously range between setting all $\bX^{(i)}$'s to be equal but otherwise unconstrained ($w=\infty$) at one extreme versus embedding the $\Delta_i$'s completely separately ($w=0$) at the other.

The problem of choosing an optimal $w$ was taken up in \cite{adali2013fidelity}.
When the individual dissimilarities are normalized to have have $\|\D_i\|_F=1$ for all $i\in[m]$, the results of \cite{adali2013fidelity} suggest that, under suitable model assumptions, the 
performance of the JOFC
procedure is relatively robust to the choice of $w$.
In application, a data-adaptive $w$ could be chosen via the bootstrapping AUC-optimization testing procedure of \cite{adali2013fidelity}, although we do not pursue this further here.
\end{itemize}

As in 3-RSMDS, minimizing (\ref{stress}) can be seen as unconstrained Raw Stress MDS on the omnibus dissimilarity matrix 
\begin{equation}
\label{eq:jofcdelta}
{\bf \Delta}=[{\bf\Delta}_{i,j}]=\begin{bmatrix}
\D_1&\eta&\cdots &\eta \\
\eta&\D_2&\cdots &\eta \\
\vdots & \vdots& \ddots & \vdots\\
\eta&\eta&\cdots &\D_m
\end{bmatrix}\in\mathbb{R}^{mn\times mn}, \hspace{5mm}\eta=\begin{bmatrix}
0&\text{NA}&\cdots &\text{NA} \\
\text{NA}&0&\cdots &\text{NA} \\
\vdots & \vdots &\ddots & \vdots\\
\text{NA}&\text{NA}&\cdots &0
\end{bmatrix}\in\mathbb{R}^{n\times n},
\end{equation}
and configuration 
$${\bf X}^\top=\left[
({\bf X}^{(1)})^\top| 
({\bf X}^{(2)})^\top|
\cdots|
({\bf X}^{(m)})^\top\right]\in\mathbb{R}^{mn\times d},
$$ with the associated weight matrix given by 
\begin{equation}
\label{eq:W}
{\bf W}=[W_{i,j}]=\begin{bmatrix}
J_n-I_n&wI_n&\cdots &wI_n \\
wI_n&J_n-I_n&\cdots &wI_n \\
\vdots & \vdots & \ddots & \vdots\\
wI_n&wI_n\cdots&\cdots &J_n-I_n
\end{bmatrix}\in\mathbb{R}^{mn\times mn};
\end{equation}
indeed, this is immediate as the raw stress criterion in (\ref{stress}) is equal to $\sigma({\bf X})=\sum_{i<j} W_{i,j}(\Delta_{i,j}-d_{i,j}({\bf X}))^2.$  
Note that, as before, the weight matrix $\bf W$ is structured to zero out the missing data entries of $\eta$ in the objective function $\sigma(\cdot).$

Note the different structure of ${\bf \Delta}$ in JOFC versus $\widetilde{\bf \Delta}$ in 3-RSMDS.  In JOFC, we impute the missing across modality dissimilarity between the same object to be 0, which allows us to build the matchedness constraint into the raw stress criterion (via the commensurability term).
In both models, we treat inter-object, cross-modality dissimilarites as missing data, and this represents the assumption that these dissimilarites are often {\it not} available
in the embedding procedure.
\begin{remark}
\label{rem:oldJOFC}
\emph{In \cite{JOFC}, the missing cross-modality dissimilarity between modality $i$ and modality $j$ was imputed as $(\D_i+\D_j)/2$, and ${\bf \D}$ was embedded using classical multidimensional scaling.  
Here we choose not to impute the missing data for two main reasons:  imputing the cross-modality dissimilarities potentially increases the variance in our embedded points; and the special structure of $\bf W$ in the missing data setting allows us to greatly speed up and parallelize the JOFC procedure (see Section \ref{sec:fastJOFC}).
In addition, in many real data settings (see Section \ref{S:results}) the $n$ objects originate from {\it disparate} data sources and are not simply repeated measurements of the same objects in a single space, which further complicates the very concept of cross-modality dissimilarities.} 
\end{remark} 

Similar to the approach in \cite{de1980multidimensional} for 3-RSMDS, our JOFC approach embeds ${\bf \D}$ by minimizing (\ref{stress}) via successive Guttman transforms.  
As in the majorization algorithm for solving 3-RSMDS, the Guttman transform step of JOFC can be efficiently computed (see Algorithm \ref{alg:2}).
However, in JOFC the matchedness constraint is built into the raw stress criterion, and we are therefore able to avoid the potentially costly Step 2 of the 3-RSMDS procedure as outlined in Section \ref{S:3SMDS}.
The JOFC algorithm proceeds as follows:
\begin{itemize}
\item[1.]  Initialize the configuration ${\bf X}_{(0)}$. 
One easily implemented initialization imputes the missing data entries of ${\bf \Delta}$ as in Remark \ref{rem:oldJOFC} and performs classical MDS on ${\bf \Delta}$; see Step 1 of Algorithm \ref{alg:1} for detail.
\item[2.] For a given threshold $\epsilon>0$, while $\sigma({\bf X}_{(t)})-\sigma({\bf X}_{(t-1)})>\epsilon$, iteratively update $\bX_{t-1}$ via the Guttman transform.
To wit, let ${\bf L}$ be the combinatorial Laplacian of the weight matrix ${\bf W}$ (i.e., if ${\bf D}$ is the diagonal matrix with $D_{i,i}=\sum_{j} W_{i,j}$, then ${\bf L=D-W}$), and define 
\begin{equation}
\label{eq:B}
B ({\bf X})_{i,j}:=\begin{cases}
\frac{-W_{i,j}{\bf\Delta}_{i,j}}{d_{i,j}({\bf X})}&\text{ if }i\neq j\text{ and }d_{i,j}({\bf X})\neq 0\\
0 &\text{ if }i\neq j\text{ and }d_{i,j}({\bf X})= 0\\
-\sum_{k=1,k\neq i}^n B({\bf X})_{i,j}&\text{ if }i= j.
\end{cases}\end{equation}
Then the raw stress criterion (\ref{stress}) can be written
\begin{align*}
\sigma(\bX_{(t)})=\sum_{i<j}W_{i,j}{\bf\Delta}_{i,j}^2+\tr\bX_{(t)}^\top \bl\bX_{(t)}-2\tr \bX_{(t)}^\top B(\bX_{(t)}) \bX_{(t)},
\end{align*}
which is majorized by
\begin{align}
\label{eq:major}
\sigma(\bX_{(t)})\leq\sum_{i<j}W_{i,j}{\bf\Delta}_{i,j}^2+\tr\bX_{(t)}^\top \bl\bX_{(t)}-2\tr \bX_{(t)}^\top B(\bX_{(t-1)}) \bX_{(t-1)},
\end{align}
a quadratic function of $\bX_{(t)}$.  
The minimizer of (\ref{eq:major}) can be found by solving the stationary equation $\nabla \sigma({\bf X}_{(t)}) =2{\bf L}{\bf X}_{(t)} - 2B(\bX_{(t-1)})\bX_{(t-1)}= 0$. 
The Guttman transform updates a configuration $\bX_{(t-1)}$ by solving $\bl \bX_{(t)} = B(\bX_{(t-1)})\bX_{(t-1)}$; in the multidimensional scaling literature, this transformation is often written as $\bX_{(t)} = \Gamma (\bX_{(t-1)}) = \bl^\dagger B(\bX_{(t-1)})\bX_{(t-1)}$ where $\bl^\dagger$ is the Moore-Penrose pseudoinverse of $\bl$.  
Notice that $\bX_{(t)}$ is centered at zero even if $\bX_{(t-1)}$ is not centered at zero.  
  \end{itemize}
\begin{algorithm}[t!]
  \caption*{SMACOF algorithm for raw stress multidimensional scaling} 
\begin{algorithmic}[1]
\caption{JOFC Algorithm for Manifold Matching (see Section \ref{S:JOFC} for detail)} \label{alg:1}
  \REQUIRE Omnibus dissimilarity matrix ${\bf \Delta}$, weight matrix ${\bf W}$, embedding dimension $d$, tol$=\epsilon$
  \ENSURE ${\bf X}\in\mathbb{R}^{mn\times d}$, a configuration of points in $\mathbb{R}^d$
  \STATE Initialize ${\bf X}_{(0)}$ via cMDS (classical MDS , see \cite{torgerson1952multidimensional,borg2005modern} for detail) of ${\bf \Delta}$  \label{step:init} \\
  i.  Set ${\bf \Delta}^{(2)}$ to be the element-wise square of ${\bf \Delta}$; i.e., ${\bf \Delta}^{(2)}_{i,j}=({\bf \Delta}_{i,j})^2$;\\
  ii.  Compute ${\bf P}=-\frac{1}{2}(I_{mn}-\frac{1}{mn}J_{mn}){\bf \Delta}^{(2)}(I_{mn}-\frac{1}{mn}J_{mn})$;\\
  iii.  Compute the $d$ largest eigenvalues $\lambda_1,\lambda_2,\cdots,\lambda_d$ of $\bf P$ with corresponding eigenvectors $u_1,u_2,\ldots,u_d$;\\
  iv.  Set ${\bf X}_{(0)}=[u_1|u_2|\cdots|u_d]\text{diag}(\lambda_i)^{1/2}$;
  \STATE Compute $\sigma({\bf X}_{(0)})$
  \WHILE{$\sigma({\bf X}_{(t)})-\sigma({\bf X}_{(t-1)})>\epsilon$}
  \STATE ${\bf X}_{(t)}={\bf L}^{\dagger}B({\bf X}_{(t-1)}){\bf X}_{(t-1)}$ 
  \STATE Compute $\sigma({\bf X}_{(t)})$
  \ENDWHILE
  \STATE Output the final iteration ${\bf X}_{\text{(final)}}$
\end{algorithmic}
\end{algorithm}
For JOFC, the resulting iterative algorithm is summarized in Algorithm \ref{alg:1}.  
Note that the sequence of steps generated by successive Guttman transforms is derived via majorization, and we note that Algorithm \ref{alg:1} is closely related to the popular SMACOF algorithm for metric multidimensional scaling; see \cite{de1980multidimensional,smacof2}.

\begin{remark}
\label{rem:eps}
\emph{ In all of the experiments in Section \ref{S:results}, the threshold $\epsilon$ is set to $10^{-6}\binom{nm}{2};$ i.e., we terminate the procedure when the normalized stress $\sigma_N(\cdot):=\sigma(\cdot)/\binom{nm}{2}$ fails to decrease by at least $10^{-6}$ between successive iterations.
Note however that, in practice, the sequential Guttman transforms often exhibit good global properties, and only a few iterations are required to obtain a sufficiently good suboptimal embedding, see \cite{kearsley1995solution}.  We empirically observe this phenomena in Figure \ref{fig:mmee}, where we see that the configuration obtained by fJOFC can stabilize after only relatively few iterates.
}
\end{remark}

In general, $\bl^\dagger$ must be calculated by singular value or QR decomposition, which may be prohibitively expensive if $mn$ is large, with computational complexity of order $O(m^3n^3)$.  
Fortunately, there are many applications in which the special structure of the weight matrix ${\bf W}$ allows for direct calculation of $\bl^\dagger$, sometimes with subsequent simplification of $\bl^\dagger B(\bX_{(t-1)})\bX_{(t-1)}$.  
Examples include the familiar case of unit weights (which is the case for the Guttman transform needed in Step 1 of the 3-RSMDS algorithm in Section \ref{S:3SMDS}) and the case of symmetric block-circulant matrices, see \cite{gower1990applications,gower2006application}. 
In Section \ref{sec:fastJOFC}, we demonstrate that the special structure of JOFC also permits the direct calculation of $\bl^\dagger$ which then results in a much simplified calculation of $\bl^\dagger B(\bX_{(t-1)})\bX_{(t-1)}$.

\section{Fast JOFC}
\label{sec:fastJOFC}
In each iteration of the JOFC algorithm (Algorithm \ref{alg:1}), we update the configuration via a Guttman transform ${\bf X}_{(t)}={\bf L}^{\dagger}B({\bf X}_{(t-1)}){\bf X}_{(t-1)}$.  
Computationally, this involves 
\begin{itemize}
\item[1.] A single calculation of ${\bf L}^\dagger$, which naively has algorithmic complexity $O((mn)^3)$ given an SVD (or QR decomposition) based pseudoinverse algorithm.  Clearly, as ${\bf L}^\dagger$ does not vary in $t$, we do not need to recalculate this pseudoinverse in every iteration.
\item[2.] Computing ${\bf L}^{\dagger}B({\bf X}_{(t-1)}){\bf X}_{(t-1)}$, which has complexity $O((mn)^2d).$   
\end{itemize}
Therefore, given a bounded number of iterations and assuming $d<mn$, the JOFC algorithm has algorithmic complexity $O((mn)^3)$.

To speed up the JOFC procedure, we first note that the form of the JOFC weight matrix allows us to algebraically compute ${\bf L}^\dagger$.  Next, we show that the resulting form of the pseudoinverse allows us to greatly simplify the computation of ${\bf L}^{\dagger}B({\bf X}_{(t-1)}){\bf X}_{(t-1)}$.
In addition, the computation of ${\bf L}^{\dagger}B({\bf X}_{(t-1)}){\bf X}_{(t-1)}$ easily lends itself to parallelization.

\subsection{Computing ${\bf L}^\dagger$}
\label{S:Ldagger}

The first step in speeding up Algorithm \ref{alg:1} is algebraically computing the pseudoinverse ${\bf L}^\dagger$.
Here, we present the computation of $\bf L^\dagger$ in the case of a more general weight matrix than considered in Eq. (\ref{eq:W}); namely, we will consider here $\bW$ of the form 
\begin{equation}
\label{eq:W2}
{\bf W}=[W_{i,j}]=\begin{bmatrix}
w_{1,1}(J_n-I_n)&w_{1,2}I_n&\cdots &w_{1,m}I_n \\
w_{2,1}I_n&w_{2,2}(J_n-I_n)&\cdots &w_{2,m}I_n \\
\vdots & \vdots & \ddots & \vdots\\
w_{m,1}I_n&w_{m,2}I_n\cdots&\cdots &w_{m,m}(J_n-I_n)
\end{bmatrix}\in(\mathbb{R}^+)^{mn\times mn};
\end{equation}
with $w_{i,j}=w_{j,i}$ for all $i,j\in[m]$ such that $i\neq j$.
This form of $\bW$ allows for different weightings across and within modalities.
The case of equal weights off diagonal, i.e., the $\bW$ in Eq. (\ref{eq:W}), will then be realized as a special case of this more general $\bW$.

In Appendix \ref{app:1}, we prove the following.
Writing
\begin{equation}
\label{eq:modW}
\mathcal{W}\!=\!\!\begin{bmatrix}
nw_{1,1}+\sum_{j\neq 1}w_{1,j}&-w_{1,2}&\cdots &-w_{1,m} \\
-w_{2,1}&nw_{2,2}+\sum_{j\neq 2}w_{2,j}&\cdots &-w_{2,m} \\
\vdots & \vdots & \ddots & \vdots\\
-w_{m,1}&-w_{m,2}\cdots&\cdots &nw_{m,m}+\sum_{j\neq m}w_{m,j}
\end{bmatrix},
\end{equation}
and
$\text{diag}(w_{i,i}):=\text{diag}(w_{1,1},w_{2,2},\cdots,w_{m,m})$,
we algebraically compute $\bf L^\dagger$ via 
$$\bl^\dagger=\mathcal{W}^{-1}\otimes I_n+\left[-\left(\mathcal{W}+n\left(\frac{J_m}{mn}-\text{diag}(w_{i,i})\right)\right)^{-1}\left(\frac{J_m}{mn}-\text{diag}(w_{i,i})\right)\mathcal{W}^{-1}-\frac{J_m}{mn} \right]\otimes J_n.$$
While brute force computation of $\mathcal{W}^{-1}$ (and $\mathcal Z$) would incur a $O(m^3)$ cost as opposed to the $O(m^3n^3)$ cost of a brute force computation of $\bl^\dagger$, structured weight matrices can greatly simplify this computation.  For example,
if ${\bf W}$ is of the form of Eq. (\ref{eq:W}), then 
a brief calculation yields that
\begin{equation}
\label{eq:W1inv}
{\mathcal W}^{-1}=\begin{bmatrix}
\frac{n+w}{n(n+mw)}&\frac{w}{n(n+mw)}&\cdots &\frac{w}{n(n+mw)}\\
\frac{w}{n(n+mw)}&\frac{n+w}{n(n+mw)}&\cdots &\frac{w}{n(n+mw)}\\
\vdots & \vdots & \ddots & \vdots\\
\frac{w}{n(n+mw)}&\frac{w}{n(n+mw)} \cdots&\cdots &\frac{n+w}{n(n+mw)}
\end{bmatrix}\in\mathbb{R}^{m\times m},\end{equation}
and
\begin{align*}
&-\left(\mathcal{W}+n\left(\frac{J_m}{mn}-\text{diag}(w_{i,i})\right)\right)^{-1}\left(\frac{J_m}{mn}-\text{diag}(w_{i,i})\right)\mathcal{W}^{-1}-\frac{J_m}{mn} \\
&\hspace{10mm}=\begin{bmatrix}
\frac{-m^2 w^2+m n^2-m n w-n^2}{wn^2m^2(n+wm)}&\frac{ -m^2 w^2-m n w-n^2}{wn^2m^2(n+wm)}&\cdots &\frac{ -m^2 w^2-m n w-n^2}{wn^2m^2(n+wm)}\\
\frac{ -m^2 w^2-m n w-n^2}{wn^2m^2(n+wm)}&\frac{-m^2 w^2+m n^2-m n w-n^2}{wn^2m^2(n+wm)}&\cdots &\frac{ -m^2 w^2-m n w-n^2}{wn^2m^2(n+wm)}\\
\vdots & \vdots & \ddots & \vdots\\
\frac{ -m^2 w^2-m n w-n^2}{wn^2m^2(n+wm)}&\frac{ -m^2 w^2-m n w-n^2}{wn^2m^2(n+wm)} \cdots&\cdots &\frac{-m^2 w^2+m n^2-m n w-n^2}{wn^2m^2(n+wm)}
\end{bmatrix}\in\mathbb{R}^{m\times m}.
\end{align*}
We shall see in Section \ref{S:speedupGutman} how these algebraic computations greatly speed-up the computation of the Guttman transform in the fJOFC procedure.

Also note that in implementing the fJOFC algorithm, only $\mathcal{W}^{-1}$ needs to be computed.
Indeed,  
${\bf 1}_{mn}^\top B({\bf X}_{(t-1)})=B({\bf X}_{(t-1)}){\bf 1}_{mn}=0,$
which immediately implies that
\begin{align*}
\left(\left[\!-\!\left(\mathcal{W}+n\!\!\left(\frac{J_m}{mn}-\text{diag}(w_{i,i})\right)\right)^{-1}\!\!\!\left(\frac{J_m}{mn}-\text{diag}(w_{i,i})\right)\mathcal{W}^{-1}\!-\!\frac{J_m}{mn} \right]\otimes J_n\right) &B({\bf X}_{(t-1)})={\bf 0}_{mn}.
\end{align*}
Resultingly, the Gutman transform in the $t$-th iteration of fJOFC is computed simply as 
$${\bf X}_{(t)}=\left(\mathcal{W}^{-1}\otimes I_n\right) B({\bf X}_{(t-1)}){\bf X}_{(t-1)}.$$

\begin{remark}
\emph{
  The key to computing the form of $\bf L^\dagger$ is realizing that $\bf L^\dagger$ can be written as 
  $${\bf L}^{\dagger}=\left({\bf L}+\frac{1}{mn}J_{mn}\right)^{-1}-\frac{1}{mn}J_{mn}.$$
  We then compute the exact form of $\left({\bf L}+\frac{1}{mn}J_{mn}\right)^{-1}$ by inverting the structured matrix
  $${\bf L}+\frac{1}{mn}J_{mn}
=\mathcal W\otimes I_n+\left(\frac{1}{mn}J_{m}-\text{diag}(w_{i,i})\right)\otimes J_n.$$
This inverse computation (Theorem \ref{thm:1} in Appendix \ref{app:1}) can be generalized to the following Woodbury-type \citep{woodbury1950inverting} matrix identity for the sum of Kronecker products.  Let $A,B\in\mathbb{R}^{m\times m}$ be matrices such that  $A$ and $(A+nB)$ are invertible matrices.  Then it follows that 
  $$(A\otimes I_n+B\otimes J_n)^{-1}= A^{-1}\otimes I_n-(A+nB)^{-1}BA^{-1}\otimes J_n.$$
 This formula generalizes Theorem \ref{thm:1}, and we are presently exploring different use cases for such an identity.
}
\end{remark}
\begin{remark}
\emph{
Even given identical initializations, the fJOFC algorithm (Algorithm 2), and the JOFC algorithm 
may not give identical embeddings of ${\bf \Delta}$, as JOFC relies on a computational approximation of ${\bf L}^\dagger$, while fJOFC exactly algebraically computes ${\bf L}^\dagger.$
}
\end{remark}

\subsubsection{More general weight matrices}
\label{rem:otherW's}
We described above how the structured $\bW$ of Eq. (\ref{eq:W}) offers an easily computed form for $\mathcal{W}^{-1}$, and here we will briefly outline some other potentially useful structured weight matrices that lend themselves to easily compute $\mathcal{W}^{-1}$.  
If $\bW$ is of the form
\begin{equation}
\label{eq:W3}
{\bf W}=[W_{i,j}]=\begin{bmatrix}
w_{1,1}(J_n-I_n)&w_{1,1}w_{2,2}I_n&\cdots &w_{1,1}w_{m,m}I_n \\
w_{1,1}w_{2,2}I_n&w_{2,2}(J_n-I_n)&\cdots &w_{2,2}w_{m,m}I_n \\
\vdots & \vdots & \ddots & \vdots\\
w_{1,1}w_{m,m}I_n&w_{2,2}w_{m,m}I_n\cdots&\cdots &w_{m,m}(J_n-I_n)
\end{bmatrix}\in(\mathbb{R}^+)^{mn\times mn};
\end{equation}
so that each modality has its own (potentially unique) weight, and the cross modality dissimilarities are weighted via a product of the within modality weights, 
then 
$$\mathcal{W}^{-1}=\frac{\text{diag}(w_{i,i})^{-1}}{n+\sum_i w_{i,i}}+\frac{1}{n(n+\sum_i w_{i,i})}J_m,$$
so that the $k,\ell$-th entry of $\mathcal{W}^{-1}$ is equal to
$$\mathcal{W}^{-1}_{k,\ell}=\begin{cases}
\frac{1}{w_{k,k}(n+\sum_i w_{i,i})}+\frac{1}{n(n+\sum_i w_{i,i})}&\text{ if } k=\ell\\
 \frac{1}{n(n+\sum_i w_{i,i})}&\text{ else. }\end{cases}$$
 Increasing the weight of the within-modality embeddings can easily be achieved by letting $\cal W$ be set to
 $${\cal W}=c n\cdot\text{diag}(w_{i,i})+\text{diag}(w_{i,i})\left( \left(\sum_iw_{i,i}\right)I_m -J_m \text{diag}(w_{i,i})\right),$$
 in which case
 $$\mathcal{W}^{-1}=\frac{\text{diag}(w_{i,i})^{-1}}{cn+\sum_i w_{i,i}}+\frac{1}{cn(cn+\sum_i w_{i,i})}J_m.$$
 Increasing (resp., decreasing) the value of the constant $c$ will have the effect of emphasizing (resp., deemphasizing) the fidelity of the subsequent embedding.

\subsection{Effect on the computation of ${\bf L}^{\dagger}B({\bf X}_{(t-1)}){\bf X}_{(t-1)}$}
\label{S:speedupGutman}
Exploiting the form of $\bl^\dagger$ computed above, we use the special structure of $B({\bf X}_{(t-1)})$ to simplify and speed up the calculation of the Guttman transform needed in the $t$-th iteration of the JOFC algorithm.  

We first note that $B({\bf X}_{(t-1)})$ is block diagonal, with $m$ diagonal blocks each of size $n\times n.$  
We will denote the diagonal blocks of $B({\bf X}_{(t-1)})$ by $B_1,B_2,\ldots,B_m.$
By construction, 
$${\bf 1}_{mn}^\top B({\bf X}_{(t-1)})=B({\bf X}_{(t-1)}){\bf 1}_{mn}=0,$$ and therefore ${\bf 1}_{n}^\top B_j=B_j{\bf 1}_{n}=0$ for all $j=1,2,\ldots,m.$  
It follows that $B_j J_n=J_n B_j=0$ for all $j=1,2,\ldots,m.$
Defining
$$A':=\frac{n+w}{n(n+mw)}I_n,\text{ and }C':=\frac{w}{n(n+mw)}I_n,$$
we arrive at 
\[{\bf L}^{\dagger}B({\bf X}_{(t-1)})=\begin{bmatrix}
A'&C'&\cdots &C' \\
C'&A'&\cdots &C' \\
\vdots & \vdots & \ddots & \vdots\\
C'&C'&\cdots &A'
\end{bmatrix}\begin{bmatrix}
B_1&0&\cdots &0 \\
0&B_2&\cdots &0 \\
\vdots & \vdots & \ddots & \vdots\\
0&0&\cdots &B_m
\end{bmatrix},\]
and so
\begin{align}
\label{eq:speedup}
\bX_{(t)}&={\bf L}^{\dagger}B({\bf X}_{(t-1)}){\bf X}_{(t-1)}=\begin{bmatrix}
A'&C'&\cdots &C' \\
C'&A'&\cdots &C' \\
\vdots & \vdots & \ddots & \vdots\\
C'&C'&\cdots &A'
\end{bmatrix}\begin{bmatrix}
B_1&0&\cdots &0 \\
0&B_2&\cdots &0 \\
\vdots & \vdots & \ddots & \vdots\\
0&0&\cdots &B_m
\end{bmatrix}\begin{bmatrix}
\bX_{(t-1)}^{(1)} \\
\bX_{(t-1)}^{(2)} \\
\vdots \\
\bX_{(t-1)}^{(m)}
\end{bmatrix}\notag\\
&=\left(\frac{n}{n(n+mw)}I_{nm}+\frac{w}{n(n+mw)}\begin{bmatrix}
I_n&I_n&\cdots &I_n \\
I_n&I_n&\cdots &I_n \\
\vdots & \vdots & \ddots & \vdots\\
I_n&I_n&\cdots &I_n
\end{bmatrix}\right)\begin{bmatrix}
B_1\bX_{(t-1)}^{(1)} \\
B_2\bX_{(t-1)}^{(2)} \\
\vdots \\
B_m\bX_{(t-1)}^{(m)}
\end{bmatrix}.
\end{align}
From (\ref{eq:speedup}), it is immediate that the update is realized via
\begin{equation}
\label{eq:update}
\bX_{(t)}^{(j)}= \frac{n}{n(n+mw)}B_j\bX_{(t-1)}^{(j)}+\sum_{\ell=1}^m \frac{w}{n(n+mw)}B_\ell\bX_{(t-1)}^{(\ell)}.
\end{equation}
\begin{remark}
\label{rem:pargut}
\emph{Note that to efficiently compute (\ref{eq:update}), we can first compute each $B_\ell\bX_{(t-1)}^{(\ell)}$ in parallel 
for $\ell\in[m],$ and then compute the update in Eq. (\ref{eq:update}).}
\end{remark}

\subsection{The fJOFC algorithm}
The algebraic computation of $\bl^\dagger$ in Section \ref{S:Ldagger} combined with the computation of the Guttman transform of Section \ref{S:speedupGutman} combine to give us the fJOFC algorithm, which is detailed below and in Algorithm \ref{alg:2}.
\begin{algorithm}[t!]
\begin{algorithmic}[1]
\caption{fJOFC: Fast JOFC Algorithm for Manifold Matching} \label{alg:2}
  \REQUIRE Omnibus dissimilarity matrix ${\bf \Delta}$, weight matrix ${\bf W}$, embedding dimension $d$, tol$=\epsilon$
  \ENSURE ${\bf X}\in\mathbb{R}^{mn\times d}$, a configuration of points in $\mathbb{R}^d$
  \STATE Set $\xi_0$ to be the configuration obtained via cMDS of $\left(\sum_i \Delta_i\right)/m$ (see Step 1. of Algorithm \ref{alg:1} for detail); Center $\xi_0$ via $\xi_0=\xi_0(I_{n}-\frac{1}{n}J_{n})$;
  \FOR{i=1,2,\ldots,m} 
  \STATE Set $\xi_i$ to be the configuration obtained via cMDS of $\Delta_i$; ; Center $\xi_i$ via $\xi_i=\xi_i(I_{n}-\frac{1}{n}J_{n})$;
  \STATE Set ${\bf X}_{(0)}^{(t)}$ to be the orthogonal Procrustes fit of $\xi_i$ onto $\xi_0$;\\
  i.  Set $T=\xi_0^T\xi_i$;\\
  ii.  Let $U\Sigma V^T$ be the singular value decomposition of $T$;\\
  iii. Set ${\bf X}_{(0)}^{(i)}=\xi_i UV^T$ 
  \ENDFOR
  \STATE Set ${\bf X}_{(0)}^\top=\left[({\bf X}_{(0)}^{(1)})^\top|({\bf X}_{(0)}^{(2)})^\top|\cdots|({\bf X}_{(0)}^{(m)})^\top\right]$
  \STATE Compute $\sigma({\bf X}_{(0)})$ as in Remark \ref{rem:faststress}
  \WHILE{$\sigma({\bf X}_{(t)})-\sigma({\bf X}_{(t-1)})>\epsilon$}
  \FOR{j=1,2,\ldots,m}
  \STATE Compute $B({\bf X}_{(t-1)}^{(j)}){\bf X}_{(t-1)}^{(j)}$
  \ENDFOR
  \FOR{j=1,2,\ldots,m}
  \STATE Set ${\bf X}_{(t)}^{(j)}=\frac{n}{n(n+nw)}B({\bf X}_{(t-1)}^{(j)}){\bf X}_{(t-1)}^{(j)}+\sum_{\ell=1}^m\frac{w}{n(n+nw)}B({\bf X}_{(t-1)}^{(\ell)}){\bf X}_{(t-1)}^{(\ell)}$
  \ENDFOR
  \STATE Set ${\bf X}_{(t)}^\top=\left[({\bf X}_{(t)}^{(1)})^\top|({\bf X}_{(t)}^{(2)})^\top|\cdots|({\bf X}_{(t)}^{(m)})^\top\right]$
  \STATE Compute $\sigma({\bf X}_{(t)})$ as in Remark \ref{rem:faststress}
  \ENDWHILE
  \STATE Output the final iteration ${\bf X}_{\text{(final)}}$
\end{algorithmic}
\end{algorithm}

The fJOFC algorithm proceeds as follows:
\begin{itemize}
\item[1.] Initialize the configuration $\bX_{(0)}$.  
If the initialization of the JOFC procedure in Remark \ref{rem:oldJOFC} is too computationally intensive (in particular, the initialization uses cMDS to embed the $mn\times mn$ omnibus dissimilarity with off-diagonal blocks imputed to be $(\Delta_i+\Delta_j)/2$) we could proceed as follows:  first, use cMDS to embed the average dissimilarity matrix $\left(\sum_i \Delta_i\right)/m$, obtaining the configuration $\xi_0$; use cMDS to embed each $\Delta_i$ and set $\bX_{(0)}^{(i)}$ to be the orthogonal Procrustes fit of the embedding to $\xi_0$---see Step 4 of Algorithm \ref{alg:2} for detail.

\item[2.]  Given current configuration $\bX_{(t-1)}$ and error threshold $\epsilon$, while $\sigma({\bf X}_{(t)})-\sigma({\bf X}_{(t-1)})>\epsilon$, compute the Guttman transform of ${\bf X}_{(t-1)}$ to obtain ${\bf X}_{(t)}$ as outlined in Section \ref{S:speedupGutman} (lines 9-15 of Algorithm \ref{alg:2}).  
To wit, first compute each $B({\bf X}_{(t-1)}^{(j)}){\bf X}_{(t-1)}^{(j)}$.  The update is then realized by setting $${\bf X}_{(t)}^{(j)}=\frac{n}{n(n+nw)}B({\bf X}_{(t-1)}^{(j)}){\bf X}_{(t-1)}^{(j)}+\sum_{\ell=1}^m\frac{w}{n(n+nw)}B({\bf X}_{(t-1)}^{(\ell)}){\bf X}_{(t-1)}^{(\ell)}$$ for all $j\in[m]$.
Each of these $m$ updates has computational complexity $O(mn^2d)$.
\end{itemize}
\begin{remark}
\label{rem:faststress}
\emph{Further speeding up the fJOFC procedure, from Eq.\@ (\ref{stress}), we see that to compute $\sigma({\bf X})$, we need not compute all $\binom{mn}{2}$ pairwise distance between rows of ${\bf X}.$  Indeed, we only need to compute $m\binom{n}{2}+\binom{m}{2}n$ interpoint distances.  Indeed, the fidelity can be written as
$$\sum_{i=1}^m \sum_{1\leq j <\, \ell\leq n}\left([\Delta_i]_{j,\ell}-d_{j,\ell}({\bf X}^{(i)})\right)^2=\frac{1}{2}\sum_{i=1}^m \|\Delta_i-d({\bf X}^{(i)})\|_F^2,$$
and the commensurability requires $\binom{m}{2}$ paired distance calculations amongst the $n$ points across the $m$ modalities.
}
\end{remark}

Given a bounded number of Guttman transform updates, the fJOFC algorithm has complexity $O(m^2n^2d)$.  
Contrasting this with the $O((mn)^3)$ complexity of JOFC points to the dramatic speedup achieved by fJOFC; see Section \ref{S:results} for further empirical demonstrations of this computational savings.
We also recall that, even with identical initializations, the JOFC iterates and fJOFC iterates will not agree in general.  The JOFC iterates rely on an approximate computation of {\bf L}$^\dagger$ while the fJOFC iterates utilize an exact algebraically computed {\bf L}$^\dagger$.
Hence, the fJOFC iterates are not only more efficiently computed than the corresponding JOFC iterates, they are also less noisy.

\begin{remark}
\label{rem:parfjofc}
\emph{
  Each step of the fJOFC procedure easily lends itself to parallel computation.
  Implemented in parallel, given a bounded number of Guttman transform updates, fJOFC has complexity $O(m^2n^2d/c)$ when run in parallel over $c$ cores.
}
\end{remark}

\section{Fast out-of-sample embedding for JOFC}
\label{S:foos}
The out-of-sample embedding framework was developed for classical MDS in \cite{trosset2008out} and for Raw Stress MDS in \cite{joos}.
Extending the latter, we
develop the out-of-sample embedding framework for JOFC.
We then demonstrate how this out-of-sample embedding can be dramatically sped-up by exploiting the special structure of the associated JOFC weight matrix, akin to the speedup of fJOFC over JOFC, and empirically demonstrate the efficiency of the procedure in Section \ref{S:ooscaboose}.

Given a configuration $\bX\in\mathbb{R}^{mn\times d}$ obtained via JOFC (or fJOFC) applied to ${\bf\Delta}\in\mathbb{R}^{mn\times mn}$, we observe a new object $\mathcal{O}$, giving rise to the out-of-sample omnibus dissimilarity
\[{\bf \D}^{(o)}\!=\![{\bf\Delta}^{(o)}_{i,j}]\!=\!\begin{bmatrix}
\D_1^{(o)}&\eta&\cdots &\eta \\
\eta&\D_2^{(o)}&\cdots &\eta \\
\vdots & \vdots & \ddots & \vdots\\
\eta&\eta&\cdots &\D_m^{(o)}
\end{bmatrix}\!\!\in\!\mathbb{R}^{m(n+1)\times m(n+1)};\, \D_i^{(o)}\!=\!\begin{bmatrix}
\D_i&\delta_i\\
\delta_i^\top &0
\end{bmatrix}\!\!\!\in\!\mathbb{R}^{n+1\times n+1},\]
where, for each $i\in[m]$, $\delta_i$ represents the within modality dissimilarities between $\mathcal{O}$ and the in sample-data objects for the $i$-th modality.  

While we could run JOFC (or fJOFC) on the full ${\bf \D}^{(o)}$, if $m$ or $n$ is large this often becomes computationally burdensome.
Rather, without re-embedding $\bf \D$, we seek to embed $\mathcal{O}$ into the configuration space determined by $\bX$ so as to best preserve both the matchedness across the $m$ versions of $\mathcal{O}$ and the within modality dissimilarities provided by $\{\delta_i\}_{i=1}^m$.
In the JOFC Raw Stress framework, the out-of-sample raw stress criterion is given by
\begin{equation}
\label{eq:joos}
\sigma_\bX({\bf y})=\underbrace{\sum_{i=1}^m\sum_j(\delta_i(j)-d(\bX^{(i)}_j,{\bf y}_j))^2}_{\text{out-of-sample fidelity}}+w\underbrace{\sum_{i<j} d({\bf y}_i,{\bf y}_j)^2}_{\text{out-of-sample commensurability}},
\end{equation}
where $\by^\top=[\by_1^\top|\by_2^\top|\cdots|\by_m^\top]\in\mathbb{R}^{m\times d}$ is the configuration obtained for the new out-of-sample observation $\mathcal{O}$.

Reordering the rows and columns of ${\bf \D}^{(o)}$ slightly, 
$${\bf\D}^{(o)}=\begin{bmatrix}
\begin{matrix} \mathlarger{\mathlarger{\mathlarger{\mathlarger{\D}}}}\end{matrix}&\begin{matrix}\delta_1&NA&\cdots&NA\\
NA&\delta_2&\cdots&NA\\
\vdots&\vdots&\ddots&\vdots\\
NA&NA&\cdots&\delta_m\end{matrix}\\
\begin{matrix}
\delta_1^\top&NA&\cdots&NA\\
NA&\delta_2^\top&\cdots&NA \\
\vdots&\vdots&\ddots&\vdots\\
NA&NA&\cdots&\delta_m^\top&\end{matrix}&\begin{matrix}0\,\,\,\,\,\,&0\,\,\,&\cdots&0\\
0\,\,\,\,\,\,&0\,\,\,&\cdots&0\\
\vdots\,\,\,\,\,\,&\vdots\,\,\,&\ddots&\vdots\\
0\,\,\,\,\,\,&0\,\,\,&\cdots&0
\end{matrix} 
\end{bmatrix},
$$
we see that the raw stress criterion (\ref{eq:joos}) can be written as 
$$\sigma_\bX({\bf y})=\sum_{i<j} {\bf W}^{(o)}_{i,j}({\bf \Delta}^{(o)}_{i,j}-d_{i,j}({\bf X}^{(o)}))^2,
$$
with the weight matrix ${\bf W}^{(o)}$ and configuration $\bX^{(o)}$ given by (where for $h,k\in\mathbb{Z}>0$, ${\bf 0}_{h,k}$ is the $h\times k$ matrix of all $0$'s 
)
$$\bW^{(o)}=
    \begin{bmatrix}
    {\bf 0}_{mn,mn}&I_m\otimes {\bf1}_n\\
    I_m\otimes {\bf1}_n^\top&wJ_{m,m}-wI_{m}
    \end{bmatrix},
\hspace{5mm}
\bX^{(o)}=\begin{bmatrix}
\bX\\
{\bf y}_1\\
{\bf y}_2\\
\vdots\\
{\bf y}_m
\end{bmatrix}. $$ 
Decompose the Laplacian of $\bW^{(o)}$ via
$$\bl^{(o)}=\bordermatrix{&mn\text{ cols}&m\text{ cols}\cr
                mn\text{ rows}&L_{1,1} &  L_{1,2}\cr 
                m\text{ rows}& L_{1,2}^\top  &  L_{2,2}},$$
and define $B(\bX^{(o)})$ as in Eq. (\ref{eq:B}), a similar decomposition of $B$ is given by 
$$B(\bX^{(o)})=\bordermatrix{&mn\text{ cols}&m\text{ cols}\cr
                mn\text{ rows}&B_{1,1} &  B_{1,2}\cr 
                m\text{ rows}& B_{1,2}^\top  &  B_{2,2}}=\begin{bmatrix}
B_{1,1} & B_{1,2}\\
B_{1,2}^\top & \text{diag}\left( {\bf1}^T \left(\delta_i\circ \frac{1}{d(\bX^{(i)},\by_{(t-1)})}\right)\right)
                \end{bmatrix},$$
                where ``$\circ$'' is the Hadamard product, 
                and for each $j\in[m],$
                $$\frac{1}{d(\bX^{(j)},\by_{(t-1)})}=\left(\frac{1}{d(\bX^{(j)}_1,(\by_{(t-1)})_1)},\ldots,\frac{1}{d(\bX^{(j)}_m,(\by_{(t-1)})_m)}\right)^\top.$$
                Note that
                $$B_{1,2}=-\begin{bmatrix}
\delta_1\circ\frac{1}{d(\bX^{(1)},\by_{(t-1)})}&{\bf 0}_m&\cdots&{\bf 0}_m\\
{\bf 0}_m&\delta_2\circ \frac{1}{d(\bX^{(2)},\by_{(t-1)})}&\cdots&{\bf 0}_m\\
\vdots&\vdots&\ddots&\vdots\\
{\bf 0}_m&{\bf 0}_m&\cdots&\delta_m \circ \frac{1}{d(\bX^{(m)},\by_{(t-1)})}\\
                \end{bmatrix}.$$
A similar majorization argument to that of in-sample JOFC yields the out-of-sample embedding procedure:
\begin{itemize}
\item[1.] Initialize the out-of-sample configuration at a random initialization $\by={\bf y}_{(0)}$.
\item[2.] While $\sigma_\bX({\bf y}_{(t)})-\sigma_\bX({\bf y}_{(t-1)})>\epsilon$ for a predetermined threshold $\epsilon$, update $\by_{(t)}$ via the Guttman transform:
\begin{equation}
\label{eq:joosupdate}
\by_{(t)}=L_{2,2}^\dagger(B_{1,2}^\top-L_{1,2}^\top)\bX+L_{2,2}^\dagger \cdot\text{diag}\left( {\bf1}^T \left(\delta_i\circ \frac{1}{d(\bX^{(i)},\by_{(t-1)})}\right)\right) \by_{(t-1)}.
\end{equation}
Derivation of this update via majorization is completely analogous to the derivation of the JOFC update step, and so details are suppressed.
\end{itemize}

As $L_{2,2}=(n+mw)I_m-w J_m$, it is immediate that 
$L_{2,2}^{\dagger}=\frac{1}{n+mw}I_m+\frac{w}{n(n+mw)} J_m.$
Therefore, to efficiently compute (\ref{eq:joosupdate}), we: 
\begin{itemize}
\item[1.] For each $j\in[m]$, compute 
$$\xi_j:=\left(-\delta_j\circ\frac{1}{d(\bX^{(j)},\by_{(t-1)})}+{\bf 1}_n\right)^\top \bX^{(j)},$$
and
$$\psi_j:= {\bf1}^T \left(\delta_i\circ \frac{1}{d(\bX^{(i)},\by_{(t-1)})}\right).$$
For each $j\in[m]$, this vector-matrix multiplication has complexity $O(nd)$,
and the full complexity of this step is $O(nmd)$.  
\item[2.] Routine computations then yield the following simplification of the Guttman transform update:
\begin{align*}
(\by_{(t)})_j=&\frac{\xi_j}{n+mw}+\frac{w}{n(n+mw)}\sum_{k=1}^m \xi_k\\
&+\frac{\psi_j}{n+mw} (\by_{(t-1)})_j+\frac{w}{n(n+mw)}\sum_{k=1}^m \psi_k\cdot (\by_{(t-1)})_k
\end{align*}
each of which has complexity $O(d),$ and the full complexity of this step is $O(md)$
\end{itemize}
Given a fixed number of modalities $m$ and a bounded number of iterates in the algorithm, the complexity of embedding each new out-of-sample observation is linear in $n$, allowing for this out-of-sample procedure to be efficiently implemented on very large data sets.
We note that the details for simultaneously embedding $k>1$ out-of-sample points are completely analogous to the $k=1$ case and so are omitted.
\begin{figure}[t!]
\centering
\includegraphics[width=1\textwidth]{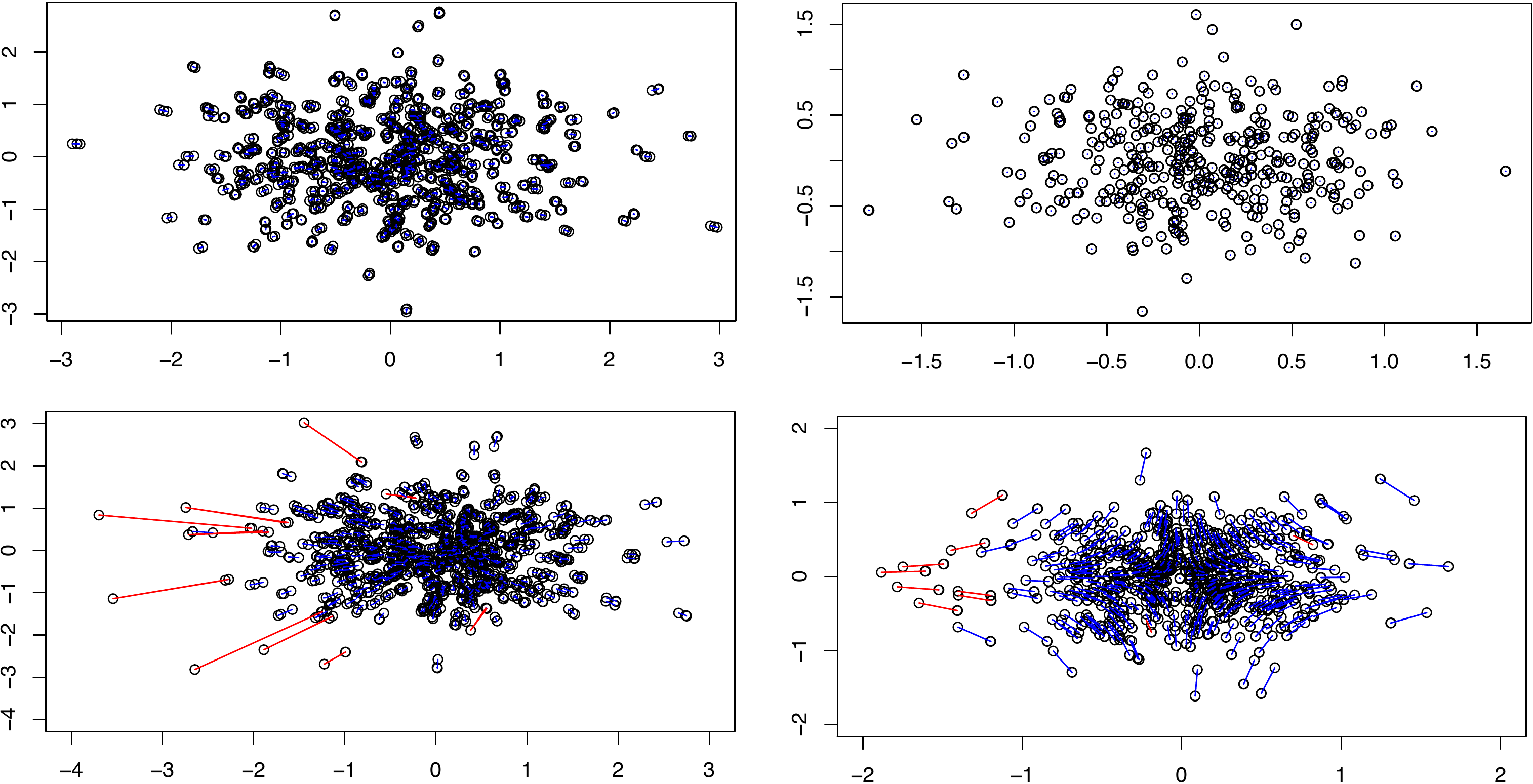}
\caption{For a single Monte Carlo iterate, we plot the embeddings of the three dissimilarities in the matched (top row, left panel is fJOFC, right panel is 3-MDS) and the anomaly (bottom row, left panel is fJOFC, right panel is 3-MDS) settings.  In the matched setting, matched triplets are connected by blue lines.  In the anomaly setting, blue lines connect the matched points across the embeddings and red lines connect the ten anomaly points they are ``matched'' to in the data.}
\label{fig:3MDSJOFC}
\end{figure}
\section{Results}
\label{S:results}
In this section we both compare and contrast fJOFC and 3-RSMDS and demonstrate the dramatic run time increase achievable by fJOFC versus JOFC over a variety of real and simulated data examples; note that all run times are measured in seconds.  In all examples, the algorithms were implemented on a MacBook Pro with a 2.6 GHz Intel Core i5 processor and 4GB 1600 MHz DDR3 memory.

\subsection{3-RSMDS and fJOFC}
\label{S:3J}
As mentioned previously, fJOFC can be viewed as a softly constrained version of 3-RSMDS.  Herein, through a simple illustrative experiment, we highlight the advantages (and disadvantages) of the fJOFC framework.
\begin{table}[t!]
\centering
\begin{tabular}{| l ||c |c |c|c|c|c|c|c|}
\hline
  Method & Runtime1 & Stress1 &  ARI & Runtime2 &Stress2 &ARI2&Conf. Ratio\\
  \hline\hline
  3-RSMDS & 214.52 &0.042 & 0.69& 312.39  &  0.18& 0.40&10.29\\
  \hline
  fJOFC & 1.39 & 0.03 & 0.66&11.56  &  0.16 & 0.57&76.07\\
  \hline
\end{tabular}
\caption{The average running time (over 25 MC iterates) is shown as Runtime1 (in the matched setting) and Runtime2 (in the anomaly setting). The average final normalized stress is Stress1 (in the matched setting) and Stress2 (in the anomaly setting).  In the matched setting, the ARI gives the a measure of the fidelity of the $K$-means clustering of the data into 400 clusters (each should contain the three jitters of the same point).  In the anomaly setting, the ARI2 column gives a measure of the fidelity of the $K$-means clustering of the non-anomalous data into 390 clusters (each should contain the three jitters of the same point).
Lastly, the Conf. Ratio column gives the ratio of the average distance between the triplets of points that have the anomalies (the ten outlier triplets) and the triplets that are correctly matched in the anomaly setting (the 390 non-outlier triplets).}
\label{table:table1}
\end{table}
Let $Y\in\mathbb{R}^{400\times2}$ have rows which are independent 2-dimensional Gaussian$\big((5,5),I_2\big)$ random variables. 
Letting $z=\max(Y)-\min(Y)$, for $i=1,2,3,$ we set $Y_i$ to be $Y+E_i$, with the entries of $E_i$ being independent Uniform$(-z/50,z/50)$ random variables, which are also independent across $i$. We set $\Delta_i$ to be the interpoint distance matrix of $Y_i$. 
These $\{Y_i\}$ represent our $n=400$ objects measured under $m=3$ modalities. 
Let $Z\in\mathbb{R}^{400\times2}$ have rows $11,12,\ldots,400$ identical to those in $Y$ and let the first ten rows of $Z$ be independent 2-dimensional Gaussian$\big((8,8),2\cdot I_2\big)$ random variables.
Let $Y_4=Z+E_4$, with $E_4$ defined analogously to the $E_i's$ above. Let $\Delta_4$ be the interpoint distance matrix of $Y_4$. 

We use fJOFC and the INDSCAL algorithm (for 3-RSMDS, as implemented in the \texttt{smacof} package \citep{leeuw2008multidimensional} in \texttt{R}) to embed $(\Delta_1,\Delta_2,\Delta_3)$ (the matched setting) and $(\Delta_1,\Delta_2,\Delta_4)$ (the anomaly setting).  Results are summarized below in Figure \ref{fig:3MDSJOFC} and Table \ref{table:table1}.  In Figure \ref{fig:3MDSJOFC}, for a single Monte Carlo iterate, we plot the embeddings of the three dissimilarities in the matched (top row, left panel is fJOFC, right panel is 3-MDS) and the anomaly (bottom row, left panel is fJOFC, right panel is 3-MDS) settings.  In the matched setting, matched triplets are connected by blue lines.  In the anomaly setting, blue lines connect the matched points across the embeddings and red lines connect the ten anomaly points they are ``matched'' to in the data.

Results over 25 MC iterates are summarized in Table \ref{table:table1}.
The average running time is shown as Runtime1 (in the matched setting) and Runtime2 (in the anomaly setting). The average final normalized stress is Stress1 (in the matched setting) and Stress2 (in the anomaly setting).  In the matched setting, the ARI (adjusted Rand index; see \cite{hubert1985comparing}) a measure of the fidelity of the $K$-means clustering of the data into 400 clusters (each should contain the three jitters of the same point).  
In the anomaly setting, the column ARI2 gives the a measure of the fidelity of the $K$-means clustering of the non-anomalous data into 390 clusters (each should contain the three jitters of the same non-anomalous point).
An ARI of 1 means that the clustering of the embedded points perfectly clusters the repeated observations of the data, while an ARI of 0 indicates that the clustering of the embedded points behaves as chance in recovering the clusters of the repeated observations.
 Lastly, the Conf. Ratio column gives the ratio of the average distance between the triplets of points that have the anomalies (the ten anomaly triplets) and the triplets that are correctly matched in the anomaly setting (the 390 non-anomaly triplets).
\begin{figure}[t!]
     \begin{center}
        \subfigure[Matched Setting.]{%
            \label{fig:ME}
            \includegraphics[width=0.5\textwidth]{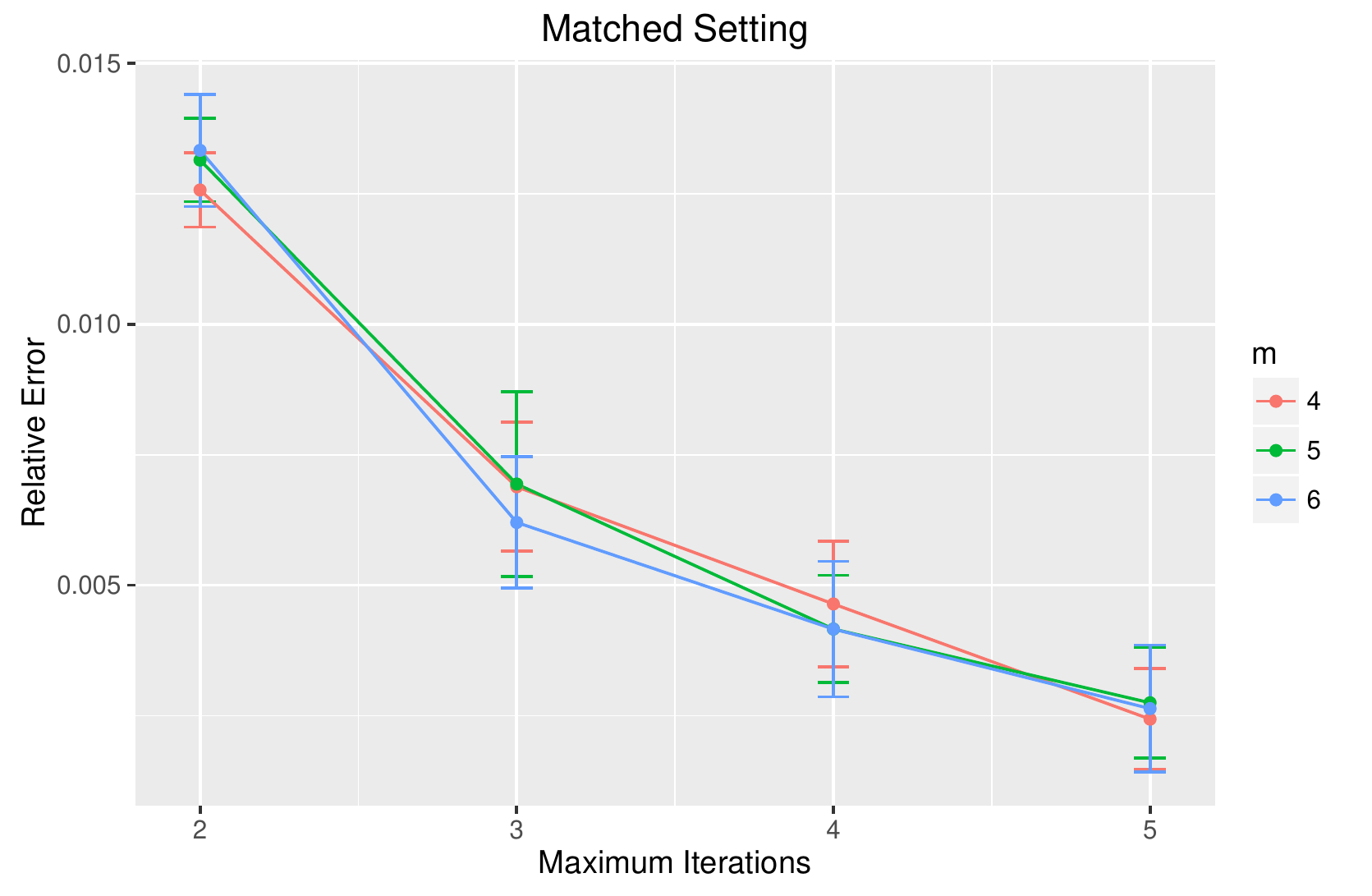}
        }%
        \subfigure[Anomaly Setting.]{%
           \label{fig:MEA}
           \includegraphics[width=0.5\textwidth]{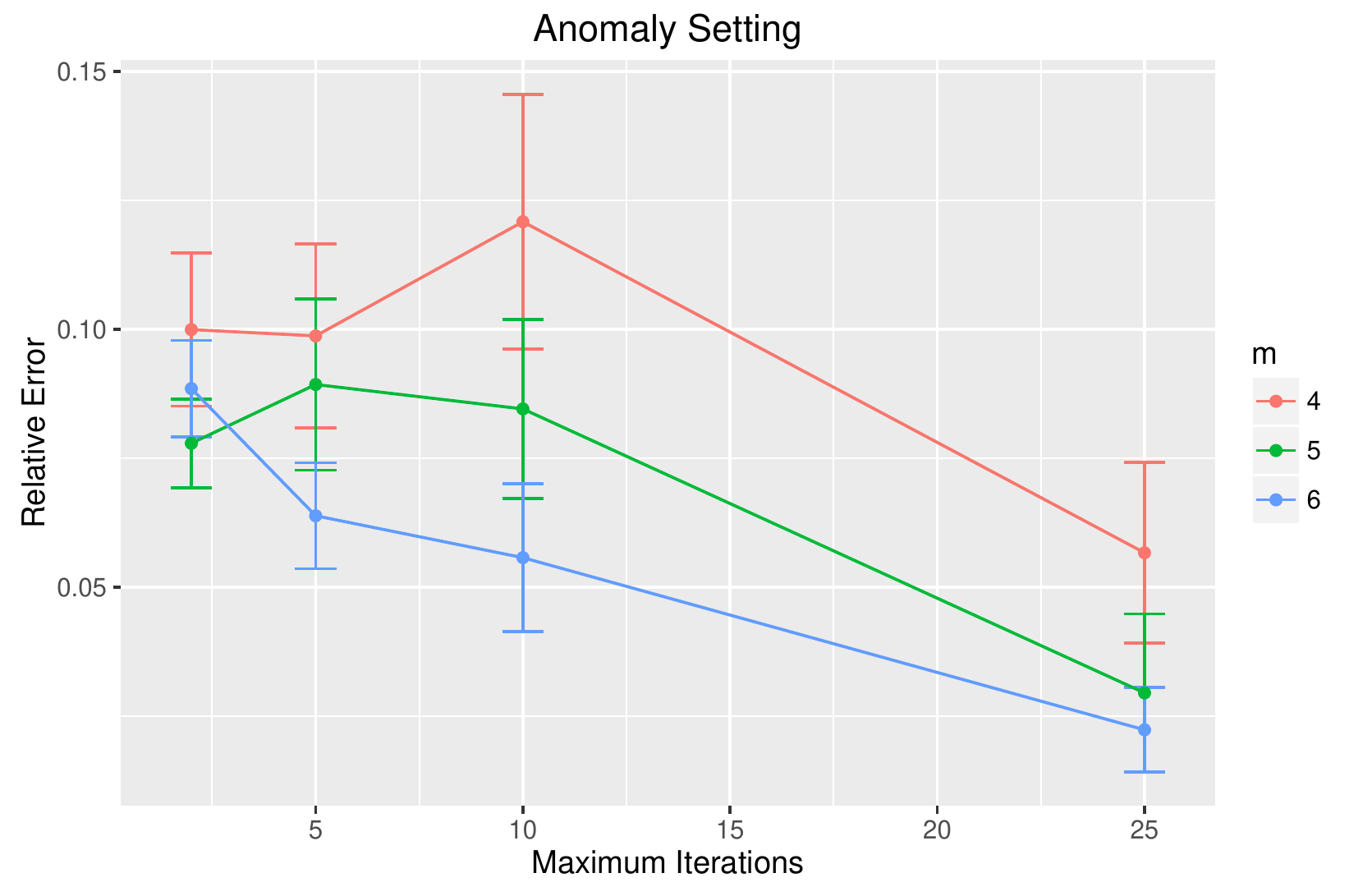}
        }
    \end{center}
    \caption{%
        We plot the average (over 25 Monte Carlo iterates) relative error 
$\frac{\|{\bf X}_{(\text{final})}-{\bf X}_{(k)}\|_2}{\|{\bf X}_{(\text{final})}\|_2}$ $\pm 2$ s.e.
over a range of values of $k$ (the $x$-axis in Figure \ref{fig:ME}).  In the left panel, we plot the ratio in the matched setting with $m=4,5,6$ dissimilarities and in the right panel, we plot the ratio in the anomaly setting with $m=4,5,6$ dissimilarities, one of which contains the anomaly.
Note that in the anomaly setting, only the relative error amongst the $n*(m-1)+(n-10)$ non-anomalous points is plotted.
     }%
   \label{fig:mmee}
\end{figure}
From this simple experiment, we see that
fJOFC is empirically i. much faster than (this off the shelf implementation of) 3-RSMDS; ii. performs comparably to 3-RSMDS when the data are all matched across the modalities with no anomalous behavior---see the ARI column in Table \ref{table:table1}; iii. is better able to preserve the correct matchedness in the presence of anomalous data---see the ARI2 and Conf. Ratio columns of Table \ref{table:table1}; results which are echoed in \cite{sunpriebe2013,shen2014manifold}

\subsection{ Error tolerance}
\label{s:err}
With the same setting as in Section \ref{S:3J}, we explore the effect of early stopping on the global fJOFC output.  As mentioned previously, the sequential Guttman transforms often exhibit good global properties, and good solutions can often be obtained after only a few iterates.  To demonstrate this, we plot the relative error (over 25 Monte Carlo iterates)
$\frac{\|{\bf X}_{(\text{final})}-{\bf X}_{(k)}\|_2}{\|{\bf X}_{(\text{final})}\|_2}$ $\pm2$ s.e.
over a range of values of $k$ (the $x$-axis in Figure \ref{fig:ME}).  In the left panel, we plot the ratio in the matched setting with $m=4,5,6$ dissimilarities and in the right panel, we plot the ratio in the anomaly setting with $m=4,5,6$ dissimilarities, one of which contains the anomaly.
Note that in the anomaly setting, only the relative error amongst the $n*(m-1)+(n-10)$ non-anomalous points is plotted.
We see that, in the matched setting, very few sequential iterates are needed before the embedding stabilizes.  In the anomaly setting---when on average over 100 sequential iterates are needed for the algorithm to terminate with $\epsilon=10^{-6}$ tolerance---we see $\approx 5\%$ relative error with only 25 iterates.  
Indeed, here and in the real data examples, we find that $\epsilon=10^{-6}$ is often a conservative tolerance level and a sufficiently good embedding can be obtained with far fewer iterates; we are presently investigating methods for adaptively choosing the number of iterates, though we do not pursue this further here.

\subsection{JOFC versus fJOFC}

\begin{figure}[t!]
\hspace{10mm}
\centering
\includegraphics[width=1\textwidth]{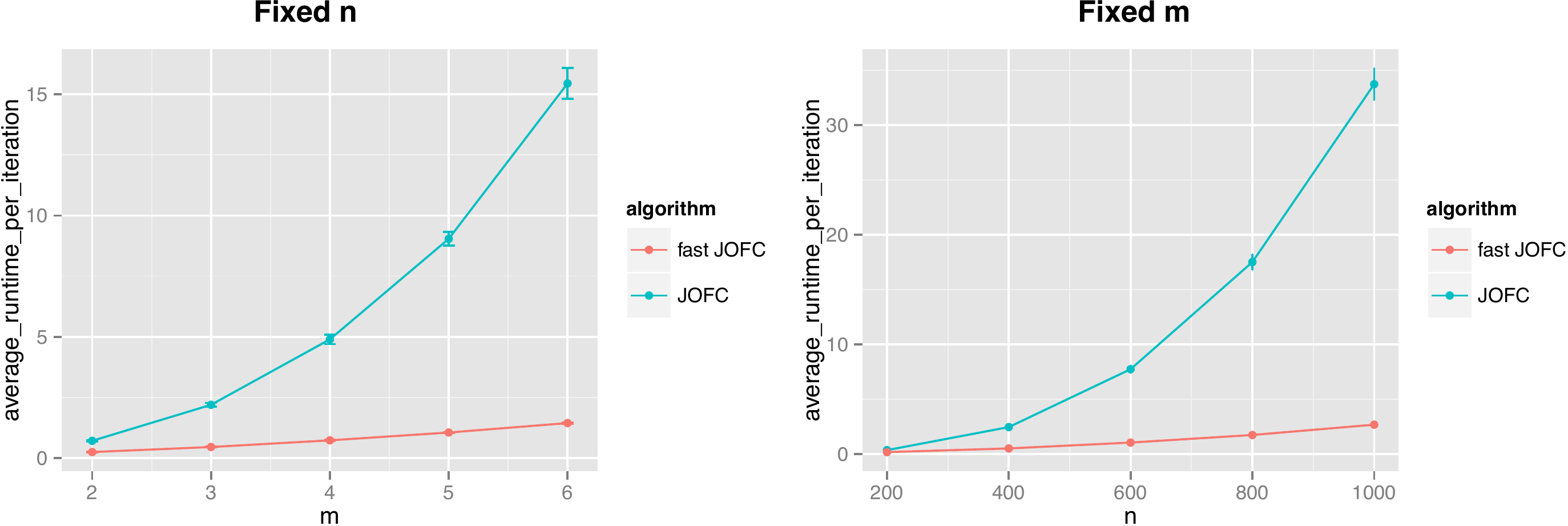}
\caption{We embed ${\bf \Delta}\in\mathbb{R}^{nm\times nm}$ via fJOFC and JOFC using identical initial configurations ${\bf X}_{(0)}=$cMDS$({\bf \Delta})$ as in Remark \ref{rem:oldJOFC}.  We then plot the average run time (in seconds) per iteration ($\pm2s.e.$) versus $m$ (left panel) and $n$ (right panel) for both JOFC and fJOFC, averaged over 50 Monte Carlo replicates.  In the left panel we fix $n=400,$ and vary $m=2,3,4,5,6$.  In the right panel, we fix $m=3$ and  vary $n=200,400,600,800,1000$.}
\label{fig:fast12}
\end{figure}


Let $Y\in\mathbb{R}^{400\times2}$ have rows which are independent 2-dimensional Gaussian$\big((5,5),I_2\big)$ random variables. 
Letting $z=\max(Y)-\min(Y)$, for $i=1,2,\ldots,6,$ we set $Y_i$ to be $Y+E_i$, with the entries of $E_i$ being independent Uniform$(-z/50,z/50)$ random variables, which are also independent across $i$. We set $\Delta_i$ to be the interpoint distance matrix of $Y_i$. 
These $\{Y_i\}$ represent our $n=400$ objects measured under $m=6$ modalities. 
For $m=2,3,\ldots,6,$ we embed the omnibus matrix $\bf \Delta$ (defined as in Section \ref{S:JOFC})
\noindent into $\mathbb{R}^2$ with both fJOFC (in serial) and JOFC using an identical initial configurations ${\bf X}_{(0)}=\text{cMDS}({\bf \Delta}),$ as outlined in Remark \ref{rem:oldJOFC}.  We plot the average run time per iteration versus $m$ for both fJOFC and JOFC in Figure \ref{fig:fast12} (left panel), averaged over 50 Monte Carlo replicates.
Even in this relatively small simulation, the decreased runtime speed is dramatically illustrated, even with fJOFC run in serial.  
The ratio of the average run times (JOFC versus fJOFC) is
$(2.86,  4.82 , 6.70 , 8.59, 10.71)$ for $m=(2,3,4,5,6)$, which suggests that fJOFC is a factor of $m$ ($\approx1.6m$) faster than JOFC here.
This corroborates the runtime results in Section \ref{sec:fastJOFC}; indeed, as here $n$ is constant, JOFC has complexity $O(m^3)$ while fJOFC has complexity $O(m^2)$.

We next consider the case of fixed $m=3$ and varying $n=(200,400,600,800,1000)$.  With $Y$ and $\bf \Delta$ defined as above, we again embed ${\bf \Delta}\in\mathbb{R}^{nm\times nm}$ into $\mathbb{R}^2$ via fJOFC (in serial) and JOFC using identical initial configurations ${\bf X}_{(0)}=$cMDS$({\bf \Delta})$.  
In Figure \ref{fig:fast12} (right panel), we plot the average run time per iteration versus $n$ for both JOFC and fJOFC, averaged over 50 Monte Carlo replicates.
Again, note the dramatic speedup achieved by fJOFC, with the ratio of the average run times (JOFC versus fJOFC) being
$(2.10,  4.86,  7.45, 10.13, 12.63)$ for $n=(200,400,600,800,1000)$. 
This suggests that fJOFC is a factor of $n$ ($\approx0.12n$) faster than JOFC here, which corroborates the runtime results in Section \ref{sec:fastJOFC}; indeed, as here $m$ is constant, JOFC has complexity $O(n^3)$ while fJOFC has complexity $O(n^2)$.

\subsection{Out-of-sample efficiency}
\label{S:ooscaboose}

\begin{figure}[t!]
\centering
\includegraphics[width=1\textwidth]{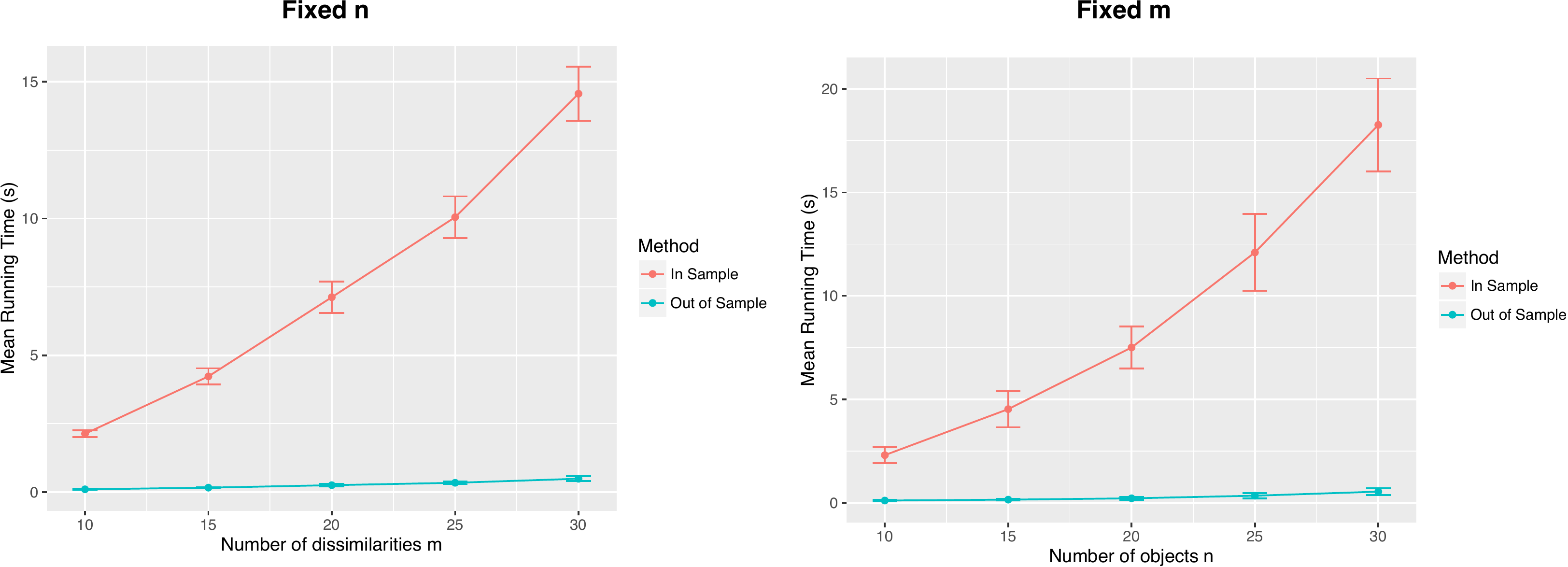}
\caption{We embed all but one object of ${\bf \Delta}\in\mathbb{R}^{nm\times nm}$ via fJOFC, and use the out-of-sample procedure to embed the final object ($m$ views of the $n$-th object). We then plot the average running time (in seconds) $\pm2s.e.$ of the in-sample and the out-of-sample procedure versus $m$ (left panel) and versus $n$ (right panel), averaged over 25 Monte Carlo replicates.  In the left panel we fix $n=200,$ and vary $m=10,15,20,25,30$.  In the right panel, we fix $m=10$ and  vary $n=200,300,400,500,600$.}
\label{fig:oosINS}
\end{figure}
We next demonstrate the efficiency of the out-of-sample fJOFC procedure.
With the same data set-up as above (with $3$-dimensional Gaussian random variables here, but otherwise identical to the data setup used above), we embed all but one object of ${\bf \Delta}\in\mathbb{R}^{nm\times nm}$ via fJOFC, and use the out-of-sample procedure to embed the final object ($m$ views of the $n$-th object). 
Running time results (in seconds) are plotted in Figure \ref{fig:oosINS}, where we plot the average running time (in seconds) $\pm2s.e.$ of the in-sample and the out-of-sample procedure versus $m$ (left panel) and versus $n$ (right panel), averaged over 25 Monte Carlo replicates.  In the left panel we fix $n=200,$ and vary $m=10,15,20,25,30$.  In the right panel, we fix $m=10$ and  vary $n=200,300,400,500,600$.
As seen previously, the runtime of fJOFC empirically varies quadratically (in $n$ for fixed $m$ and in $m$ for fixed $n$).
However, we observe that the runtime of the out-of-sample procedure empirically varies linearly (in $n$ for fixed $m$ and in $m$ for fixed $n$), which agrees with the computational complexity results of Section \ref{S:foos}.

In Table \ref{table:table2} we show the sum of the residual errors of the out-of-sample embedding versus the in-sample embedding---$\sum_{i=1}^{m}\|{\bf X}^{(i)}_{(\text{final})}[n,:]-{\bf y}_i\|_2$---for fixed $n$ and varying $m$ (top row) and for fixed $m$ and varying $n$ (bottom row) averaged over 25 Monte Carlo iterates. 
For each combination of $m$ and $n$, we first embed the full $mn\times mn$ dissimilarity ${\bf \Delta}$ using fJOFC.
We next embed all but one of the objects ($n-1$ objects over $m$ modalities) using fJOFC and the $n$-th object via the out-of-sample procedure of Section \ref{S:foos}, and compute the sum of the residual errors between the out-of-sample and the in-sample embeddings of the $n$-th object.
We see that, for fixed $n$ and varying $m$, the total error is increasing in $m$ but negligible on average per modality.  As $m$ is fixed and $n$ varies, the total error is relatively constant, which is unsurprising as, in each case, exactly $m$ additional data points are being out-of-sample embedded into a fixed dimensional space.

\begin{table}[t!]
\caption*{Sum of Residual Errors of Out-of-Sample Versus In-Sample}  
\centering
\begin{tabular}{| l ||c |c |c|c|c|c|c|}
\hline
   & $m=10$ & $m=15$ & $m=20$ & $m=25$ &$m=30$\\
  \hline
  $n=200$ & 0.067 & 0.121 & 0.184 & 0.366 & 0.364 \\
  \hline
  \hline
   & $n=200$ & $n=300$ & $n=400$ & $n=500$ &$n=600$\\
  \hline
  $m=10$ & 0.057 & 0.059 & 0.101 & 0.078 & 0.091 \\
  \hline
\end{tabular}
\caption{The sum of the residual errors of the out-of-sample embedding versus the in-sample embedding---$\sum_{i=1}^{m}\|{\bf X}^{(i)}_{(\text{final})}[n,:]-{\bf y}_i\|_2$---for fixed $n$ and varying $m$ (top row) and for fixed $m$ and varying $n$ (bottom row) averaged over 25 Monte Carlo iterates. }
\label{table:table2}
\end{table}

\subsection{Real Data Examples}
We next demonstrate the key feature of the JOFC procedure in a pair of real data sets; namely, the ability of the algorithm to preserve cross-modality matchedness while not forcing incommensurate versions of the data points to be artificially embedded close to one another. 
Indeed, in the JOFC procedure,
\begin{itemize}
\item[1.] if an object's properties are well-preserved across the $m$ modalities, then the object's associated $m$ points in the configuration will be embedded close to each other; 
\item[2.] if an object's properties are not well-preserved across the $m$ modalities, then JOFC (with well-chosen $w$) will not artificially force the object's $m$ incommensurate configuration points to be close to each other in the embedding. 
\end{itemize}
Incommensurate embeddings can inform both how and why the data modalities differ.  
By studying these pathologies further, we aim to better understand the data features that are emphasized in one modality versus another, which is crucial for understanding potential benefits from pursuing further inference in the joint (versus single) embedding space.

We explore this further below in a data set derived from the French and English Wikipedia graphs and in a time series of zebrafish calcium ion brain images from \cite{prevedel2014simultaneous}.
\begin{figure}[t!]
     \begin{center}
        \subfigure[Dendrogram merge heights, all $n=1382$ points.]{%
            \label{fig:DMH}
            \includegraphics[width=0.5\textwidth]{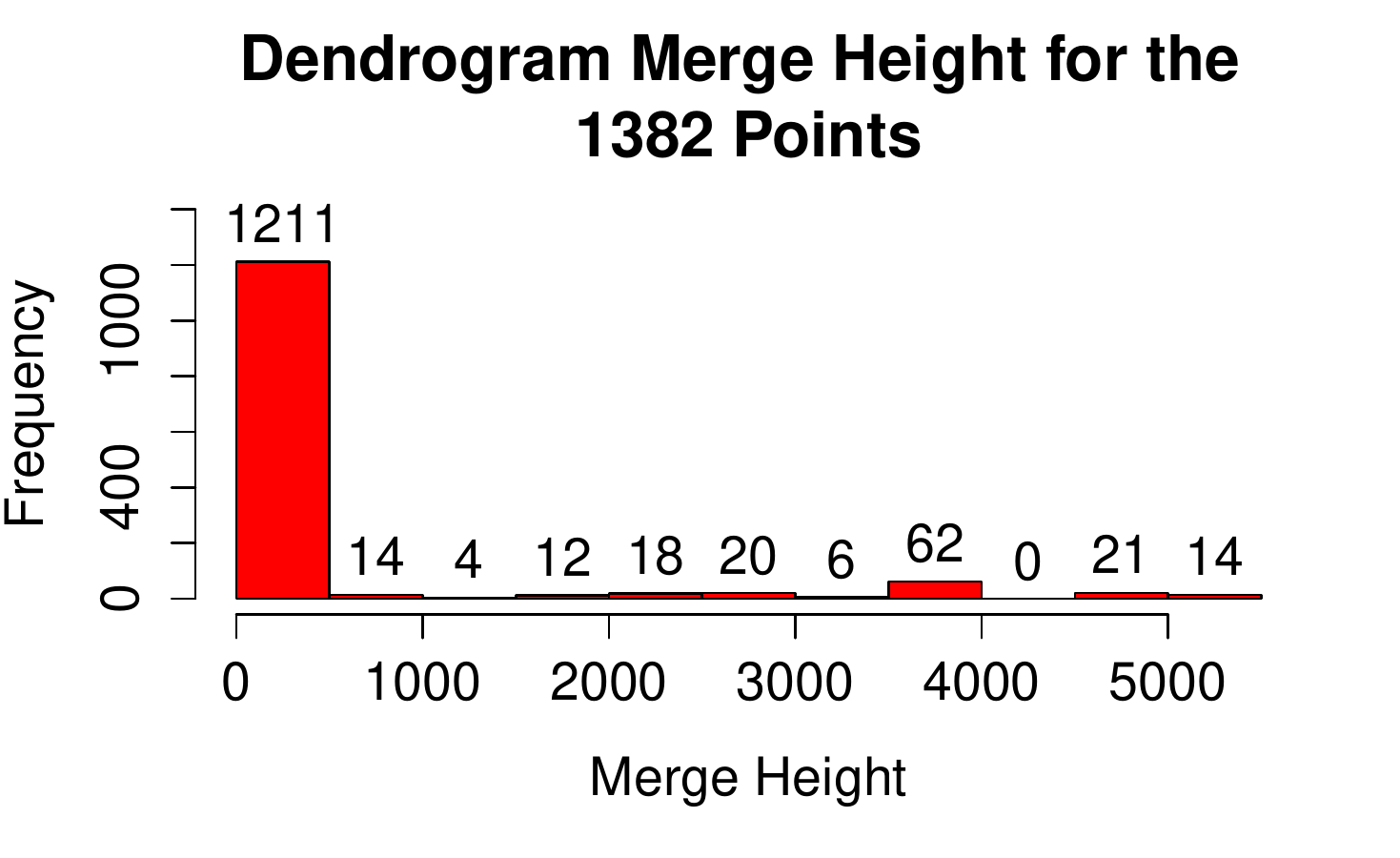}
        }%
        \subfigure[Dendrogram merge heights for the $n=1055$ points with merge height $<100$.]{%
           \label{fig:DMZ}
           \includegraphics[width=0.5\textwidth]{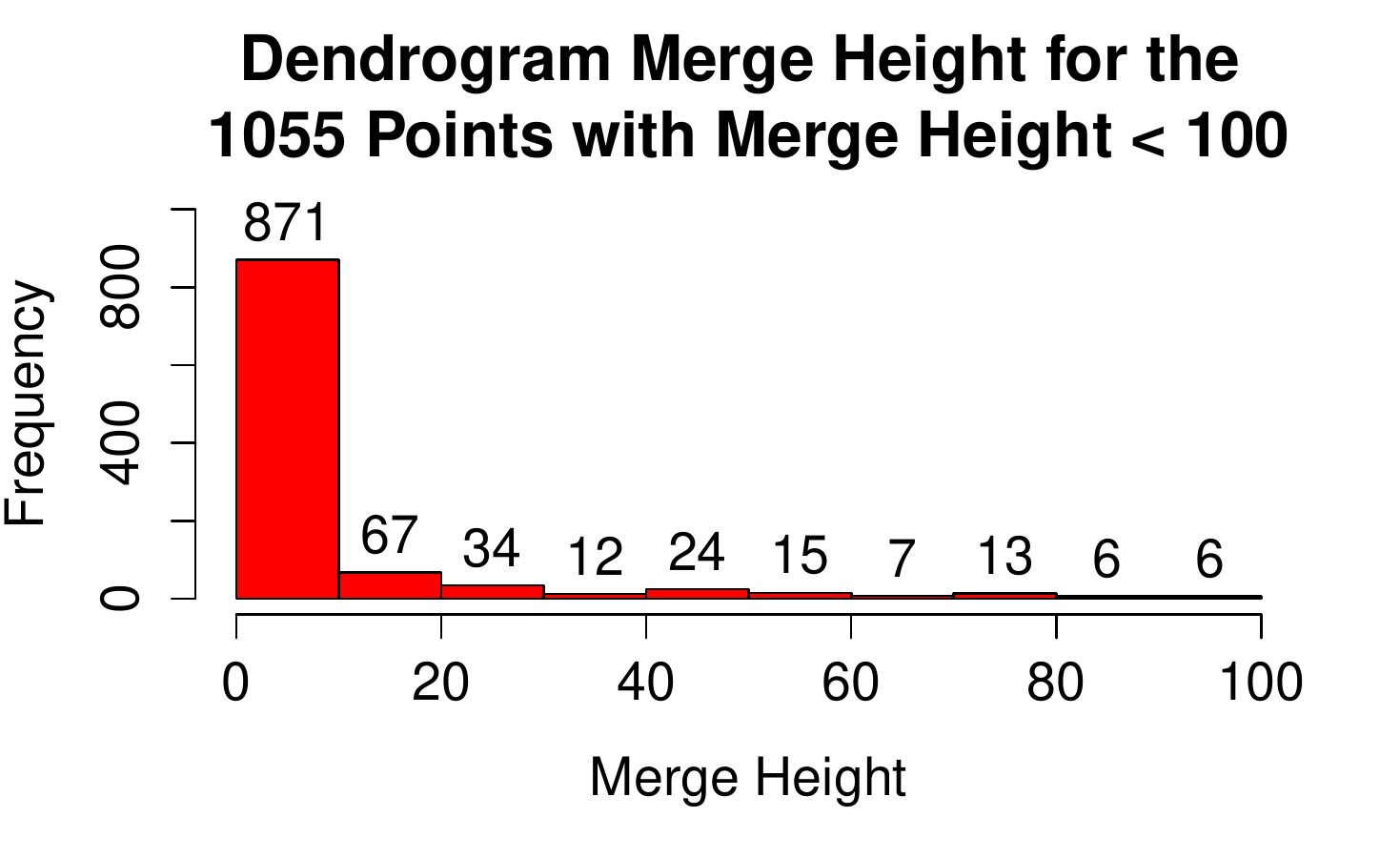}
        }
    \end{center}
    \caption{%
        Histograms showing, for each of the $n=1382$ points in (a) and for each of the $1055$ points with merge height $<100$ in (b),
         the height in the hierarchical clustering dendrogram when each of the four modalities was first merged into a single cluster for that point.
     }%
   \label{fig:subfigures}
\end{figure}
\subsubsection{Wikipedia}
\label{S:wiki}
We collect the $n=$1382 articles $\{y_{1i}\}_{i=1}^{1382}$ from English Wikipedia which compose the 2-hop neighborhood of the article entitled ``Algebraic Geometry'' (where articles are linked if there exists a hyperlink in one article to the other, and these links are considered undirected).
There is a natural $1$-$1$ correspondence between these articles and their versions in French Wikipedia, and we will denote the associated French articles by $\{y_{2i}\}_{i=1}^{1382}$.  

As in \cite{shen2014manifold}, each $\{y_{ji}\}_{i=1}^{1382}$ for $j=1,2$, further gives rise to two measures of inter-article dissimilarity:  $\D_{j1}$, the shortest path distance in the undirected hyperlink graph; and $\D_{j2}$, the cosine dissimilarities between text feature vectors (provided by latent semantic indexing, see \cite{deerwester1990indexing} for detail) associated with each article.
We use fJOFC---with $w=10$ as suggested by \cite{adali2013fidelity}---to embed these $n=1382$ points across $m=4$ modalities into $\mathbb{R}^{10}.$ 
Note that implementing our fJOFC algorithm in serial ran in $\approx$42.2 minutes while the JOFC algorithm with the same settings ran in $\approx$10.37 hours (a factor of $\approx$14.7 speedup).

In this omnibus embedding, if all 4 embedded versions of a single Wikipedia article lie close together, then this article's relationship to all of the other articles is preserved across modality.
If any of the 4 embedded versions is incommensurate with the others then this would indicate either: 
\begin{itemize}
\item[i.] The text features of the article differ significantly across language; i.e. the associated row of $\bX_i^{(2)}$, the embedding associated with $\D_{12}$, is far from $\bX_i^{(4)}$, the embedding associated with $\D_{22}$.
While the French articles are not translations of their English counterparts (or vice versa), further understanding the textual feature highlighted by these incommensurabilities would be useful before pursuing further inference (e.g. topic modeling) in the joint embedding.
\item[ii.] The hyperlink graph structure is not preserved across modality; i.e. the associated row of $\bX_i^{(1)}$, the embedding associated with $\D_{11}$, is far from $\bX_i^{(3)}$, the embedding associated with $\D_{21}$.

\item[iii.] The hyperlink structure and the textual similarities are incommensurate; i.e. the associated row of $\bX_i^{(1)}$, the embedding associated with $\D_{11}$, is far from $\bX_i^{(2)}$, the embedding associated with $\D_{12}$, or the associated row of $\bX_i^{(3)}$, the embedding associated with $\D_{21}$, is far from $\bX_i^{(4)}$, the embedding associated with $\D_{22}$.
By studying these incommensurabilities further, we hope to better understand the data features that are emphasized by graph-based versus text-feature-based methodologies.
\end{itemize}

To investigate further, we proceed by hierarchically clustering (using Ward's method, see \cite{johnson1967hierarchical} for detail) the $4\times 1382$ points of the omnibus embedding and then compute the pairwise cophenetic distance (the height in the resulting dendrogram at which the two points are first clustered together) between each of the points.  
If the dissimilarities are well preserved across modality, then the maximum cophenetic distance between two embedded versions of the same article (we call this the {\it Dendrogram Merge Height} or DMH) should be small.  
\begin{figure}[t!]
\hspace{10mm}
\centering
\includegraphics[width=.6\textwidth]{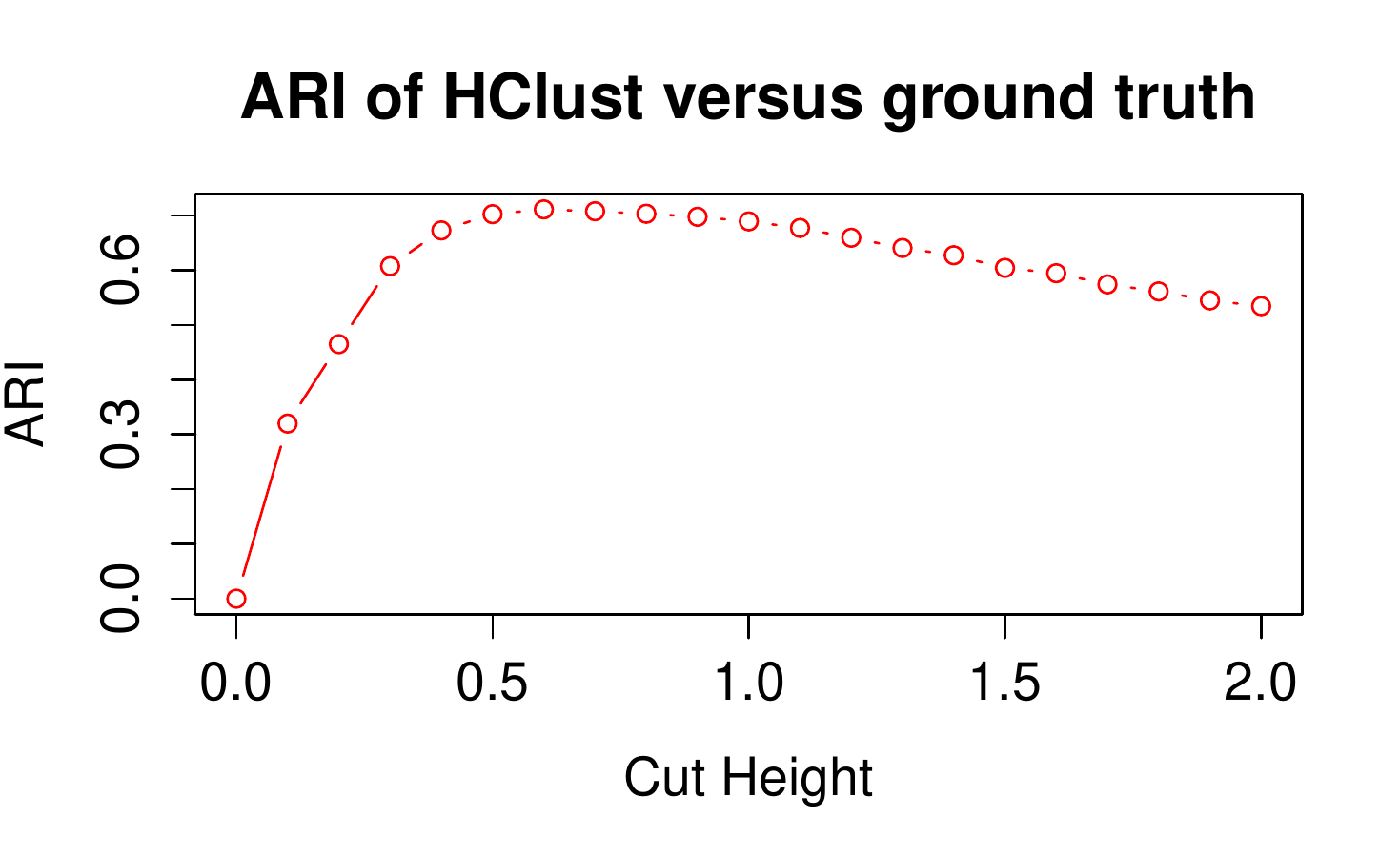}
\caption{The adjusted Rand index between the clusters given by the hierarchical clustering dendrogram at height $h\in[0,2]$ and the ground truth clustering (given by the 1382 size 4 clusters each composed of a single article across modalities).}
\label{fig:ARI}
\end{figure}

In Figure \ref{fig:DMH}, we plot a histogram of the DMH's for the 1382 articles, and note that over 76$\%$ of the articles have DMH less than 100.  In Figure \ref{fig:DMZ} we see that over 63$\%$ of the articles have DMH less than 10.
To further confirm that the dissimilarities are well preserved across modality, we calculated cluster labels given by the hierarchical clustering dendrogram at height $h\in[0,2]$.  We then compute the adjusted Rand index, ARI \citep[see][]{hubert1985comparing}, between these clusterings and the ground truth clustering (given by the 1382 size 4 clusters each composed of a single article across modalities), and plot this in Figure \ref{fig:ARI}.  From the figure, we see that the clustering is not only grossly clustering the article 4-tuples together, but is also capturing the fine-grain differences between the different articles as well.

If the ARI between the hierarchical clustering and the ground truth clustering was equal to 1, then the structure of the four dissimilarities would be nearly identical, and joint inference across modality would yield minimal gain over separately embedding the $\D_i$'s and then applying subsequent inference methodologies.  \begin{figure}[t!]
\hspace{10mm}
\centering
\includegraphics[width=.7\textwidth]{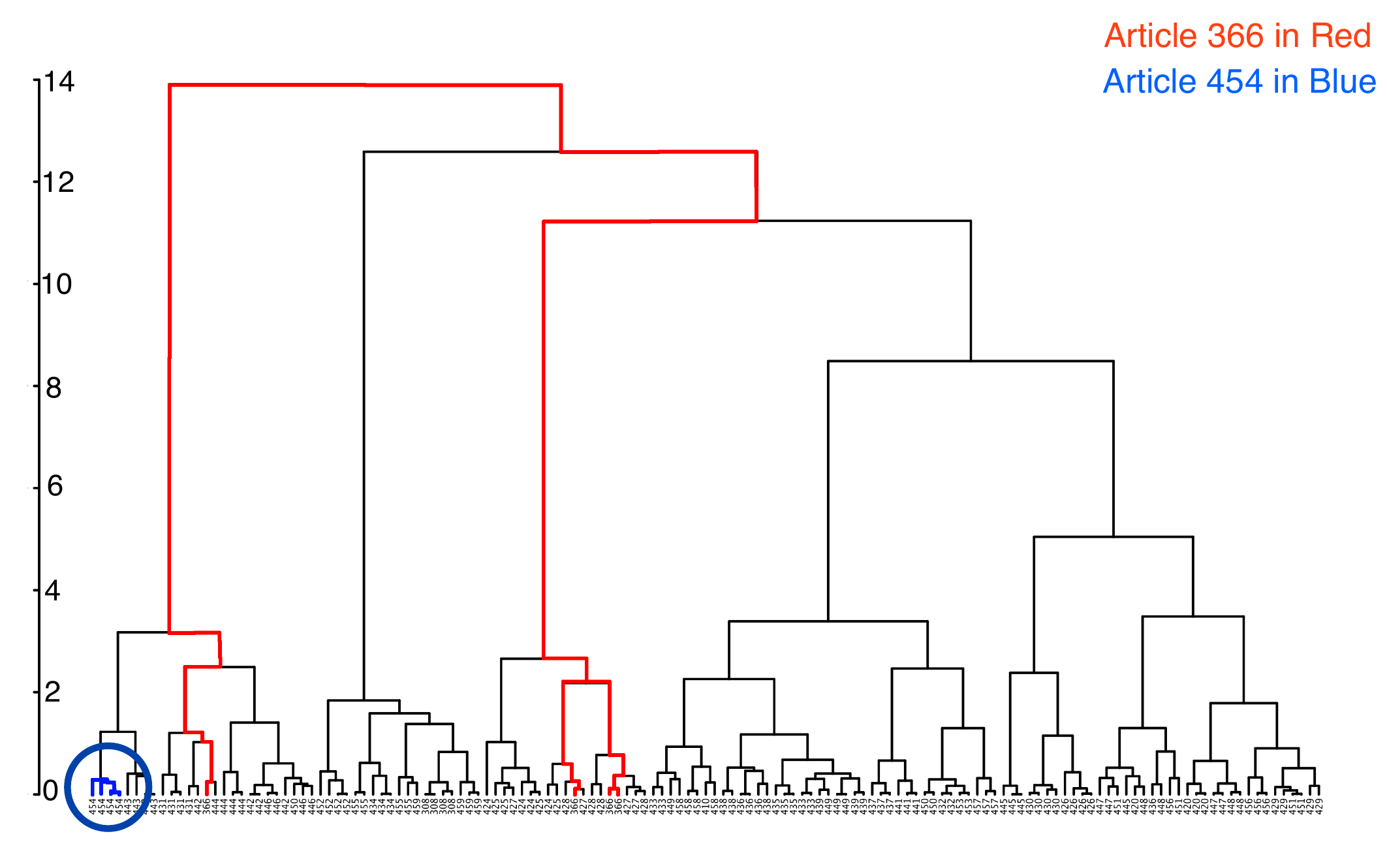}
\caption{A branch of the hierarchical clustering dendrogram when the tree is cut at height 20.  Note that the four modality-specific embeddings of article 454 (highlighted in blue in the dendrogram) are very similar, while those of article 366 are not (the English graph with shortest path distance differs significantly from the other three modalities for this point).}
\label{fig:dend}
\end{figure}
From Figures \ref{fig:DMH}-\ref{fig:DMZ} and \ref{fig:ARI}, we see this is not the case. 
Indeed, we see that the text-feature-based methods and graph-based methods are emphasizing some different data features both within and across language, and therefore
for some articles the relative geometry in the four modality-specific embeddings is not commensurate.
We illustrate this in Figure \ref{fig:dend}, where we plot a
branch of the hierarchical clustering dendrogram when the tree is cut at height 20.  
Note that although the four modality-specific embeddings of many articles (article 454 is highlighted here in blue as an example) are very similar, some of the articles' embeddings are not preserved well across modality (article 366 is highlighted here in red as an example; note that the English graph with shortest path distance differs significantly from the other three modalities for this article).


\subsubsection{Zebrafish brains}
\label{S:zeebs}
In \cite{prevedel2014simultaneous}, the authors combined Light-Field Deconvolution Microscopy and pan-neuronal expression of GCaMP, a fluorescent calcium indicator that serves as a proxy for neuronal activity, to produce a time series of whole-brain zebrafish neuronal activity at near single neuron resolution.
The data consists of 5000 realizations of a multivariate time series $\{Z^{(t)}\}_{t=1}^{5000}$ with  
$Z^{(t)}\in\mathbb{R}^{5379}$ for all $t$, where for each $i\in[5379],\,Z^{(t)}(i)\in\mathbb{R}$ represents the activity of neuron $i$ at time $t$.
Each time frame $[t,t+1)$ is $1/20$ of a second; i.e. the data was collected at 20 Hz.
After preprocessing the data and removing some artificial edge neurons, we are left with $Z^{(t)}\in\mathbb{R}^{5105}$ for each of $t=1,2,\ldots,5000$.

  \begin{figure}[t!]
\centering
\includegraphics[width=.8\textwidth]{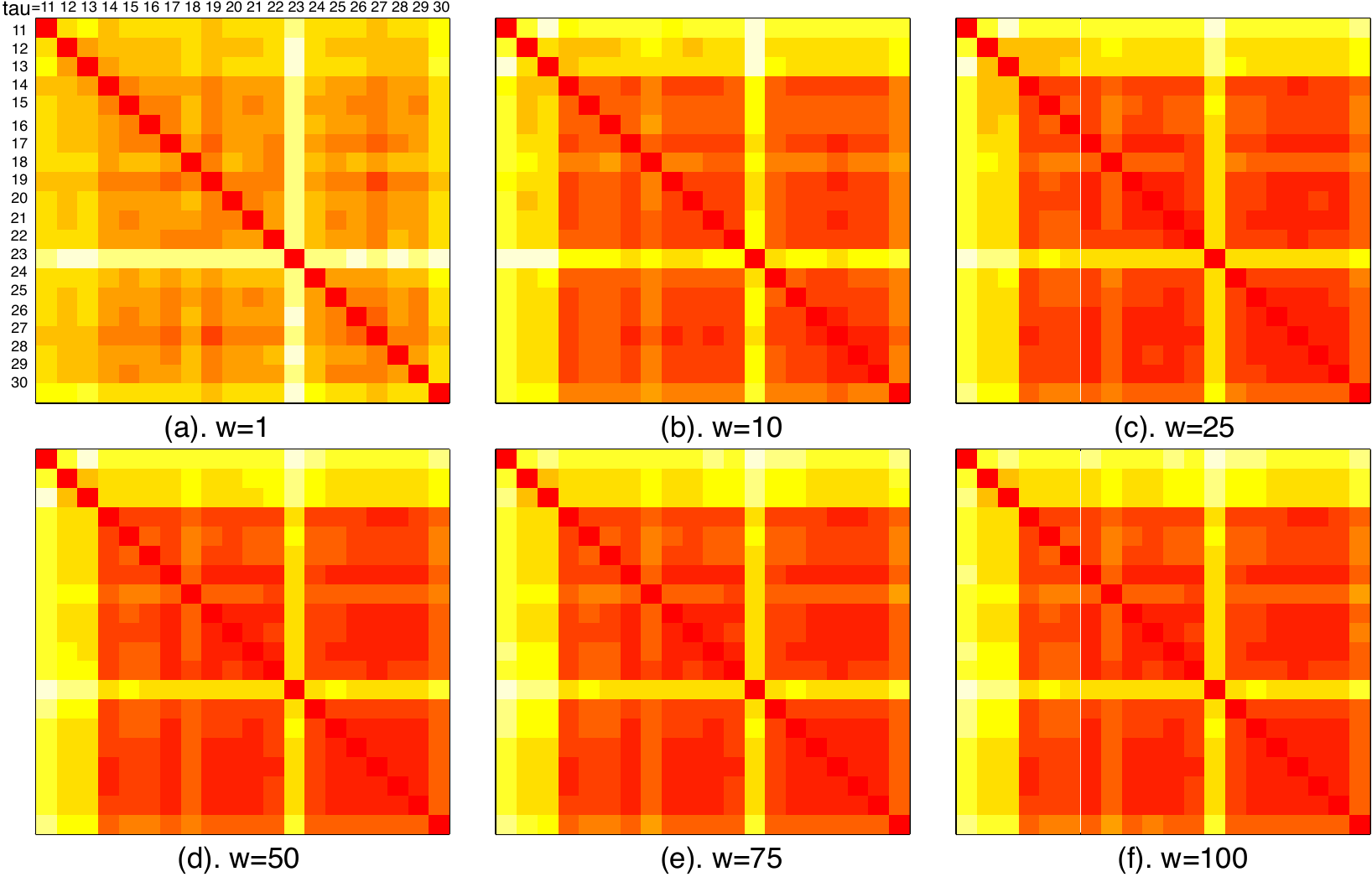}
\caption{Heatmaps of the Frobenius norm differences between the $20$ zebrafish neuron embeddings $\{\bX^{(\gt)}\}_{\gt=11}^{30}$ obtained by fJOFC over a range of $w$'s.
Each heatmap is a $20\times 20$ grid, where the intensity of the $i,j$-th entry indicates the difference between the embeddings of the $n^*$ fish neurons at times $\gt=i$ and $\gt=j$; more red indicates less difference in the embedded space and white indicating very different embeddings.  Note the anomalous point at $\gt=23.$}
\label{fig:fishheat}
\end{figure}

Binning the time stamps into $100$ overlapping periods of $5$ seconds (so that for each $\gt\in[100],$ bin $\gt$ consists of the matrix of observations 
$${\bf Z}^{(\gt)}=[Z^{(50(\gt-1)+1)}|\cdots|Z^{(50(\gt+1)}]=\left[
({\bf Z}^{(\tau)}_1)^\top| 
({\bf Z}^{(\tau)}_2)^\top|
\cdots|
({\bf Z}^{(\tau)}_{5105})^\top\right]^\top\in\mathbb{R}^{5105\times 100}),$$ we compute a time series of 100 dissimilarity matrices $\{\D^{(\gt)}\}_{\gt=1}^{100}$ as follows.  For each $\gt$, we compute the thresholded 
correlation matrix $D^{(\gt)}\in\mathbb{R}^{5105\times 5105}$
with $$D^{(\gt)}_{i,j}=\mathbbm{1}\{|\text{corr}(\bZ^{(\gt)}_i,\bZ^{(\gt)}_j)|>0.7\}$$ (where the threshold $0.7$ was chosen to ensure sufficient sparsity in the resulting $D^{(\gt)}$'s).
These correlation matrices are then transformed to dissimilarity matrices $\{\D^{(\gt)}\}_{\gt=1}^{100}$ by defining
$$\D^{(\gt)}_{i,j}=1-\frac{|N_\gt(i)\cap N_\gt(i)|}{|N_\gt(i)\cup N_\gt(i)|},$$
where $N_\gt(i)$ is the neighborhood of neuron $i$ in $D^{(\gt)}$ viewed as a graph.

Initial change point detection analysis, analogous to that in \cite{fishes}, indicated that there was an anomaly in the neural correlations at time $\gt^*=23$ and identified $n^*=469$ neurons responsible for this anomaly.   
To explore this further, we use fJOFC to embed a portion of the time series (from times $\gt=11$ to $\gt=30$) obtaining the configuration 
$$\bX^\top=[(\bX^{(11)})^\top|(\bX^{(12)})^\top|\cdots|(\bX^{(30)})^\top]^\top.$$
If there is an anomaly in the activity of the $n^*$ neurons at $\gt^*=23$, this should be evinced by $\bX^{(23)}$ significantly differing from $\bX^{(\gt)}$ for $\gt\neq 23$, as seen in Figures \ref{fig:fishheat} and \ref{fig:fishplot}.
Moreover, the embedding can also inform the {\it structure} of the anomaly, as we can identify the change in structure within the $n^*=469$ neurons which is responsible for the anomaly in the embedded space; see Figure \ref{fig:fishmove}.
Below, we expound on the details of our embedding procedure and findings.
  \begin{figure}[t!]
\centering
\includegraphics[width=.8\textwidth]{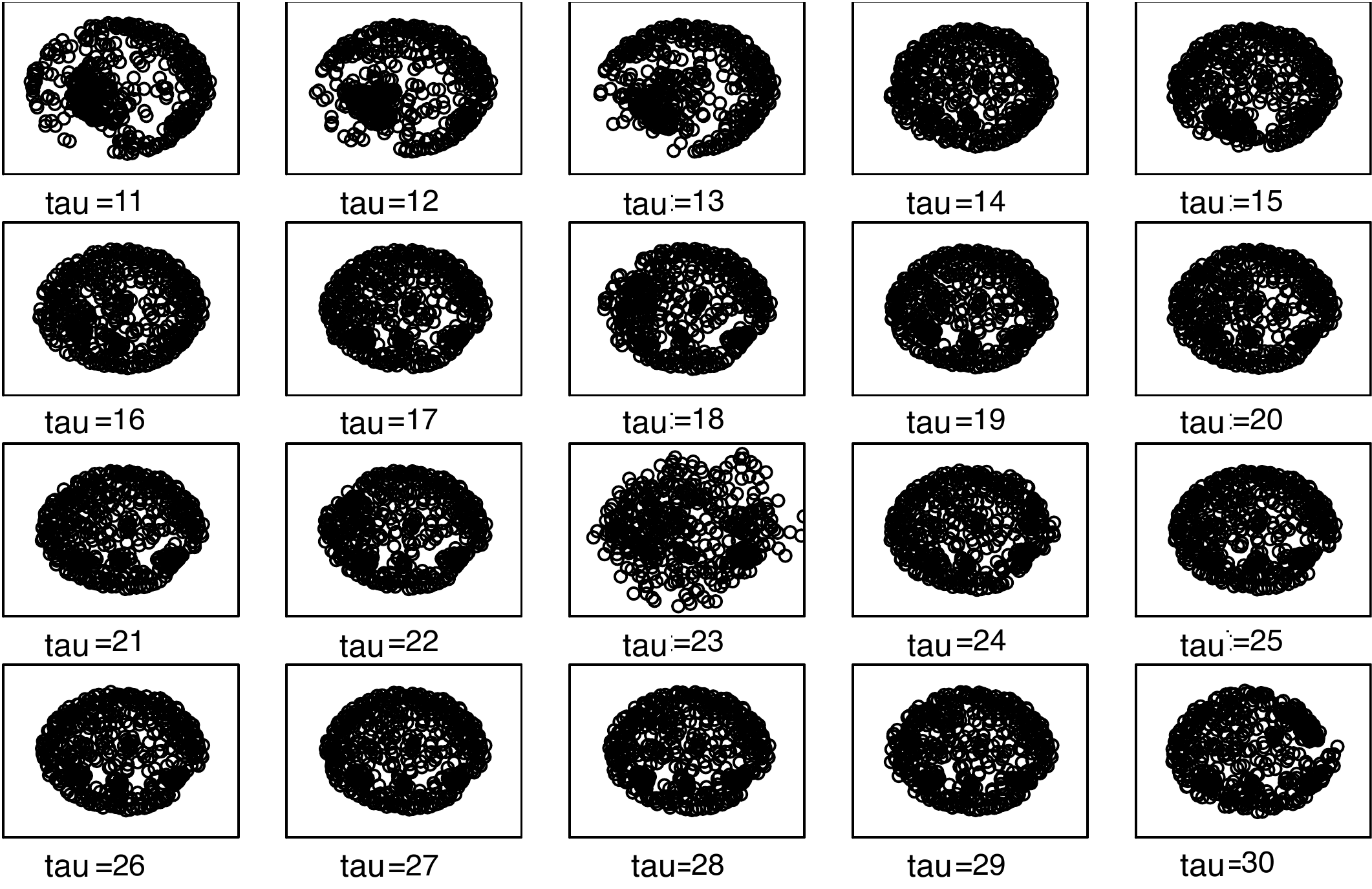}
\caption{Embeddings of the $m=20$ elements of the time-series $\{\widetilde\D^{(\gt)}\}_{\gt=11}^{30}$ into $\mathbb{R}^2$ obtained via fJOFC with $w=10$.
Each of the $20$ plots is on the same set of axes.
Note the anomaly at $\gt=23$.}
\label{fig:fishplot}
\end{figure}

Restricting the full dissimilarities to the $n^*$ identified anomalous neurons---yielding a times series $\{\widetilde\D^{(\gt)}\}_{\gt=1}^{100}$ of $100$  dissimilarities in $\mathbb{R}^{469\times469}$---we first embed the $m=20$ elements of $\{\widetilde\D^{(\gt)}\}_{\gt=11}^{30}$ into $\mathbb{R}^2$.
To test if there is an anomaly at $\gt=23$, we next compute the Frobenius norm differences between the $20$ embeddings $\{\bX^{(\gt)}\}_{\gt=11}^{30}$ in the configuration.  
Results are summarized in Figure \ref{fig:fishheat}, where we plot a heatmap of the Frobenius norm differences between the $\{\bX^{(\gt)}\}_{\gt=11}^{30}$ over a range of $w$'s
(plots of the 2-dimensional fJOFC embeddings with $w=10$ across $\gt=11,12,\ldots,30$ are displayed in Figure \ref{fig:fishplot}).
Each heatmap is a $20\times 20$ grid, where the intensity of the $i,j$-th entry indicates the difference between the embeddings of the $n^*$ fish neurons in $\bX^{(i)}$ and $\bX^{(j)}$; more red indicates less difference in the embedded space and white indicating very different embeddings. 
We see that, across the range of $w$'s, there is a significant anomaly in the embedding at $\gt=23$.  This both confirms the initial findings of an anomaly at $\gt=23$ and demonstrates the potential robustness of this anomaly-detection procedure to misspecified $w$.
We also note that the embeddings at times $\gt=11,12,13$ are significantly different from the embeddings at all other times.
Further analysis is needed to determine if this is neuroscientifically significant or a data collect/algorithmic artifact.
We lastly note that this embedding ran in $\approx 1.5$ hours using fJOFC run in serial and over $20$ hours using JOFC, again showing the dramatic speedup of our fJOFC procedure.

To further understand the structure of this anomaly, we plot the change in the embeddings from times 21--22, times 22--23, times 23--24, and times 24--25 in Figure \ref{fig:fishmove} (so that there are $2n^*$ points in each panel).
In the figure, the neurons in the configuration at time 23 are displayed as red points, with neurons in the configuration at other times displayed as black points.
For each individual neuron, the movement in the configuration from times $\gt$ to $\gt+1$ are highlighted with blue lines; i.e., there is a line connecting the position of the neuron at time $\gt$ to its position at time $\gt+1$. 
From this figure, we can identify the groups of neurons whose change in activity is responsible for the anomaly.
Again, further analysis is necessary to determine the potential neuroscientific significance of these neurons' activity.

\begin{figure}[t!]
\centering
\includegraphics[width=.8\textwidth]{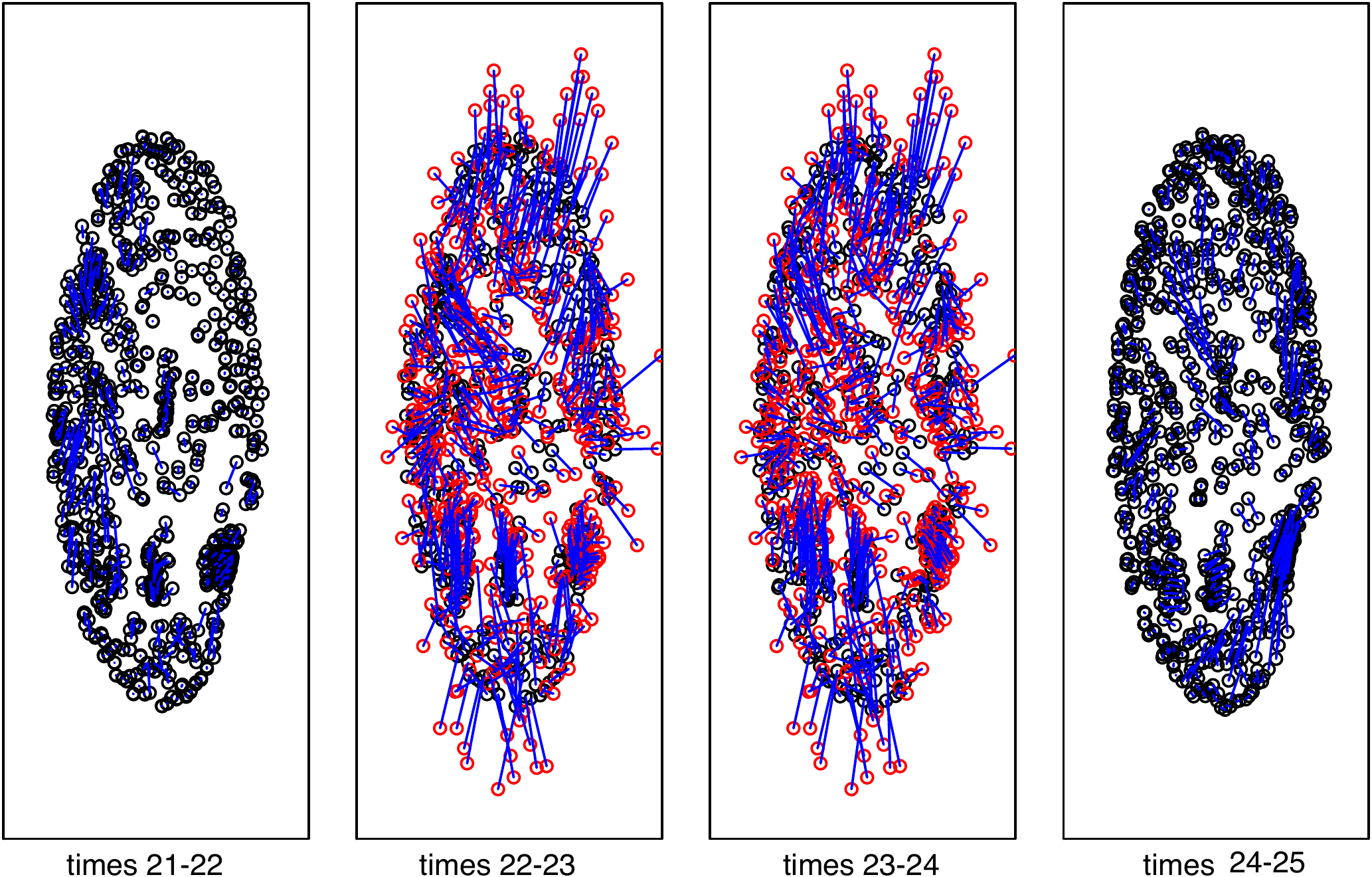}
\caption{Plot of the change in the embeddings from times 21--22, times 22--23, times 23--24, and times 24--25 (so that there are $2n^*$ points in each panel).
In the figure, the neurons in the configuration at time 23 are displayed as red points, with neurons in the configuration at other times displayed as black points.
For each individual neuron, the movement in the configuration from times $\gt$ to $\gt+1$ are highlighted with blue lines; i.e., there is a line connecting the position of the neuron at time $\gt$ to its position at time $\gt+1$.}
\label{fig:fishmove}
\end{figure}

\section{Conclusion}
The JOFC algorithm has proven to be a valuable and adaptable tool for a variety of inference tasks (e.g., graph matching \citep{sgmjofc}; hypothesis testing \citep{JOFC}; joint classification \citep{sunpriebe2013}; among others).  
The key capability enabled by our fJOFC algorithm (both in-sample and out-of-sample) versus the JOFC algorithm is enhanced scalability in $m$ and $n$; indeed, for a fixed $n$, we see a factor of $m$ speedup over the JOFC algorithm, and for a fixed $m$ we see a factor of $n$ speed up achieved by fJOFC.
Additionally, the out-of-sample fJOFC procedure is shown to have linear runtime in $n$.
Combined with sparse dissimilarity representations of very large data sets, this capability to simultaneously embed many different large dissimilarities, both in and out-of-sample, 
enables the
complex structure of the data to more easily be interrogated,
leading to potentially significant discoveries heretofore beyond our grasp.

While the sequential Guttman transforms computed in Algorithm \ref{alg:1} are only guaranteed to converge to a stationary configuration, because the sequence of raw stress values is decreasing, in practice they will typically converge to a local minimizer of ${\bf \sigma(X)}$.
Note that, in most cases, the local convergence rate of the iterative Guttman transforms is linear, see \cite{smacof2}.  
In practice, the sequential Guttman transforms often exhibit good global properties, and only a few iterations are required to obtain a sufficiently good suboptimal embedding, see \cite{kearsley1995solution}.  
Analyzing these global properties and/or modifying fJOFC to accelerate the linear convergence---for example, by incorporating relaxed updates in the iterative majorization as in \cite{de1980multidimensional}---are essential next steps for further scaling fJOFC to very big data.

\appendix
\section{Derivation of $\bf L^\dagger$}
\label{app:1}
In this section, we collect supporting results to derive the desired form of $\bf L^\dagger$.  
We first prove that $\bl^\dagger$ can be realized via ${\bf L}^{\dagger}=\left({\bf L}+\frac{1}{mn}J_{mn}\right)^{-1}-\frac{1}{mn}J_{mn}$,
\begin{proposition}
\label{prop:pseudoinv}
Let $\bW$ be any symmetric weight matrix in $\mathbb{R}^{mn\times mn}$.  If $\bl$ is the combinatorial Laplacian of $\bW$, then $\bl$ can be equivalently realized via 
\begin{equation}
\label{eq:pseudoinv}
{\bf L}^{\dagger}=\left({\bf L}+\frac{1}{mn}J_{mn}\right)^{-1}-\frac{1}{mn}J_{mn}.
\end{equation}
\end{proposition}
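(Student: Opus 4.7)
The plan is to verify the four Moore-Penrose defining conditions for the candidate matrix on the right-hand side. First I would record the two structural facts I need: since $\bW$ is symmetric, $\bL = \bD - \bW$ is symmetric positive semidefinite, and because row-sums of $\bL$ vanish we have $\bL \mathbf{1}_{mn} = 0$, hence $\bL J_{mn} = J_{mn} \bL = 0$. Setting $P := J_{mn}/(mn)$, the latter means $\bL P = P \bL = 0$, and $P$ is the orthogonal projector onto $\mathrm{span}(\mathbf{1}_{mn})$, so $P^2 = P$.

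Next I would argue that $M := \bL + P$ is invertible (implicitly assuming the weighted graph is connected, which is the JOFC setting of Section \ref{S:JOFC}). Since $\bL$ and $P$ are symmetric PSD and $\bL P = 0$, they are simultaneously diagonalizable; on $\ker \bL = \mathrm{span}(\mathbf{1}_{mn})$ the matrix $M$ acts as $P$, which is the identity there, while on $\mathbf{1}_{mn}^\perp$ it acts as $\bL$, which is strictly positive definite there. Thus every eigenvalue of $M$ is positive.

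The key identity is then $MP = PM = P$, which upon multiplying by $M^{-1}$ yields $M^{-1} P = P M^{-1} = P$, and consequently
\[
\bL M^{-1} = (M - P) M^{-1} = I - P, \qquad M^{-1} \bL = I - M^{-1} P \cdot \text{(appropriate manipulation)} = I - P.
\]
Writing $L^{\sharp} := M^{-1} - P$, I would then verify directly: (i) $\bL L^{\sharp} = \bL M^{-1} - \bL P = I - P$, which is symmetric; (ii) $L^{\sharp} \bL = M^{-1}\bL - P\bL = I - P$, also symmetric; (iii) $\bL L^{\sharp} \bL = (I-P)\bL = \bL$ using $P\bL=0$; and (iv) $L^{\sharp} \bL L^{\sharp} = (I-P)(M^{-1}-P) = M^{-1} - P - PM^{-1} + P^2 = M^{-1} - P - P + P = L^{\sharp}$. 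These are the four Moore-Penrose axioms, so by uniqueness $L^{\sharp} = \bL^\dagger$.

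The main obstacle is really just the invertibility step: the algebraic verifications (i)--(iv) fall out immediately once one exploits $\bL P = P \bL = 0$, but establishing that $\bL + P$ is nonsingular requires knowing that $\ker \bL = \mathrm{span}(\mathbf{1}_{mn})$, i.e.\ that the support graph of $\bW$ is connected. For the JOFC weight matrix \eqref{eq:W} with $w>0$ this is automatic, so the proposition as used in the paper follows without further hypotheses.
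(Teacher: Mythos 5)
Your proof is correct and follows essentially the same route as the paper's: both arguments verify the Moore--Penrose conditions directly, with all the algebra driven by the identity $\bl J_{mn} = J_{mn}\bl = 0$ (equivalently, $\bl P = P\bl = 0$ for $P = J_{mn}/(mn)$). The one point where you go beyond the paper is in justifying that $\bl + \frac{1}{mn}J_{mn}$ is actually invertible---which requires $\ker \bl = \mathrm{span}(\mathbf{1}_{mn})$, i.e.\ connectivity of the support graph of $\bW$---a hypothesis the paper's proof uses silently; noting it is a minor but genuine improvement on the stated proposition, which claims to hold for ``any'' symmetric weight matrix.
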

\begin{proof}
The proof is straightforward linear algebra, but we include it here for completeness.
We first note that $J_{mn}\bl={\bf L}J_{mn}=0$, so that 
$$\left(\bl+\frac{1}{mn}J_{mn}\right)J_{mn}=J_{mn}=J_{mn}\left(\bl+\frac{1}{mn}J_{mn}\right).$$
We then calculate
\begin{align*}
\bl\left[\left({\bf L}+\frac{1}{mn}J_{mn}\right)^{-1}-\frac{1}{mn}J_{mn}\right]\bl&=\bl\left({\bf L}+\frac{1}{mn}J_{mn}\right)^{-1}\bl\\
&=\bl\left({\bf L}+\frac{1}{mn}J_{mn}\right)^{-1}\left(\bl+\frac{1}{mn}J_{mn}-\frac{1}{mn}J_{mn}\right)\\
&=\bl\left(I_{mn}-\frac{1}{mn}J_{mn}\right)=\bl;
\end{align*}
and
\begin{align*}
&\left[\left({\bf L}+\frac{1}{mn}J_{mn}\right)^{-1}-\frac{1}{mn}J_{mn}\right]\bl\left[\left({\bf L}+\frac{1}{mn}J_{mn}\right)^{-1}-\frac{1}{mn}J_{mn}\right]\\
&=\left[\left({\bf L}+\frac{1}{mn}J_{mn}\right)^{-1}-\frac{1}{mn}J_{mn}\right]\left(\bl+\frac{1}{mn}J_{mn}-\frac{1}{mn}J_{mn}\right)\left[\left({\bf L}+\frac{1}{mn}J_{mn}\right)^{-1}-\frac{1}{mn}J_{mn}\right]\\
&=\left[I_{mn}-2\frac{1}{mn}J_{mn}+\frac{1}{mn}J_{mn} \right]\left[\left({\bf L}+\frac{1}{mn}J_{mn}\right)^{-1}-\frac{1}{mn}J_{mn}\right]=
\left({\bf L}+\frac{1}{mn}J_{mn}\right)^{-1}-\frac{1}{mn}J_{mn};
\end{align*}
and $\left[\left({\bf L}+\frac{1}{mn}J_{mn}\right)^{-1}-\frac{1}{mn}J_{mn}\right]\bl=I_{mn}-\frac{1}{mn}J_{mn}=\bl\left[\left({\bf L}+\frac{1}{mn}J_{mn}\right)^{-1}-\frac{1}{mn}J_{mn}\right]$ is Hermitian.  It follows that
$\bl^\dagger=\left({\bf L}+\frac{1}{mn}J_{mn}\right)^{-1}-\frac{1}{mn}J_{mn}$ as desired.
\end{proof}

We have that
the combinatorial Laplacian of ${\bf W}$ is given by ${\bf L}=\mathcal W\otimes I_n-\text{diag}(w_{i,i})\otimes J_n$.
It follows that 
$${\bf L}+\frac{1}{mn}J_{mn}
=\mathcal W\otimes I_n+\left(\frac{1}{mn}J_{m}-\text{diag}(w_{i,i})\right)\otimes J_n.$$
Proposing that $({\bf L}+\frac{1}{mn}J_{mn})^{-1}$ is of the form $\mathcal{V}\otimes I_n+\mathcal{Z}\otimes J_n,$ we arrive at the following.
\begin{theorem}
\label{thm:1}
With notation as above, let $\bW$ be a weight matrix of the form of Eq. (\ref{eq:W2}), and assume that $w_{i,i}>0$ for all $i\in\{1,2,\ldots,m\}$.  Let $\bl$ be the combinatorial Laplacian of $\bW$, then 
$$\left({\bf L}+\frac{1}{mn}J_{mn}\right)^{-1}=\left(\mathcal W\otimes I_n+\left(\frac{1}{mn}J_{m}-\text{diag}(w_{i,i})\right)\otimes J_n\right)^{-1}=\mathcal{V}\otimes I_n+\mathcal{Z}\otimes J_n,$$
where $
\mathcal V=\mathcal W^{-1}$
and ${}
\mathcal Z=-\left(\mathcal{W}+n\left(\frac{1}{mn}J_{m}-\text{diag}(w_{i,i})\right)\right)^{-1}\left(\frac{1}{mn}J_{m}-\text{diag}(w_{i,i})\right)\mathcal{W}^{-1}.$
\end{theorem}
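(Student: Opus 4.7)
My plan is to verify the claimed formula by the standard ansatz-and-match approach, leveraging the basic Kronecker identities $(X\otimes Y)(U\otimes V)=(XU)\otimes (YV)$, $I_n J_n = J_n I_n = J_n$, and $J_n^2 = nJ_n$. Write $A := \mathcal W$ and $B := \frac{1}{mn}J_m - \text{diag}(w_{i,i})$ so that by the computation preceding the theorem,
\[
{\bf L} + \tfrac{1}{mn}J_{mn} = A\otimes I_n + B\otimes J_n.
\]
Since $I_n$ and $J_n$ are linearly independent in $\mathbb{R}^{n\times n}$ and every block of the target inverse is a linear combination of $I_n$ and $J_n$, it is natural to seek an inverse of the form $\mathcal V\otimes I_n + \mathcal Z\otimes J_n$ for some $\mathcal V,\mathcal Z\in\mathbb R^{m\times m}$.

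The first step is to expand
\begin{align*}
(A\otimes I_n + B\otimes J_n)(\mathcal V\otimes I_n + \mathcal Z\otimes J_n)
&= A\mathcal V\otimes I_n + A\mathcal Z\otimes J_n + B\mathcal V\otimes J_n + B\mathcal Z\otimes J_n^2\\
&= A\mathcal V\otimes I_n + \bigl(A\mathcal Z + B\mathcal V + nB\mathcal Z\bigr)\otimes J_n.
\end{align*}
Setting this equal to $I_{mn} = I_m\otimes I_n$ (and using independence of $I_n,J_n$) yields the pair of matrix equations
\[
A\mathcal V = I_m, \qquad (A+nB)\mathcal Z = -B\mathcal V.
\]
Solving gives $\mathcal V = \mathcal W^{-1}$ and $\mathcal Z = -(\mathcal W + nB)^{-1}B\mathcal W^{-1}$, which are precisely the formulas in the statement. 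Since the ansatz produces a genuine two-sided inverse (the computation is symmetric under reversing the factors once $\mathcal V$ and $\mathcal Z$ satisfy the above equations, as $A,B,\mathcal V,\mathcal Z$ all commute under the Kronecker framework for this block structure), the theorem follows.

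The main obstacle is showing that the two $m\times m$ matrices $\mathcal W$ and $\mathcal W + nB$ are genuinely invertible, so that the proposed $\mathcal V$ and $\mathcal Z$ exist. For $\mathcal W$, strict diagonal dominance does the job: the $i$-th row of $\mathcal W$ has diagonal entry $nw_{i,i} + \sum_{j\neq i} w_{i,j}$ and off-diagonal absolute row-sum $\sum_{j\neq i} w_{i,j}$, so since $w_{i,i}>0$ the diagonal strictly dominates by $nw_{i,i}$. For $\mathcal W + nB$, a direct computation shows its $(i,j)$ entry equals $\sum_{k\neq i} w_{i,k} + \tfrac{1}{m}$ on the diagonal and $-w_{i,j} + \tfrac{1}{m}$ off-diagonal; the additive $\tfrac{1}{m}$ shift is the only subtle point, but it can be handled by writing $\mathcal W + nB = \mathcal W - n\,\text{diag}(w_{i,i}) + \tfrac{1}{m}J_m$ and applying the Sherman--Morrison formula to the (strictly diagonally dominant, hence invertible) matrix $\mathcal W - n\,\text{diag}(w_{i,i})$, reducing invertibility of $\mathcal W + nB$ to a single scalar non-vanishing condition that is easily checked from positivity of the $w_{i,j}$. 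Once invertibility is established, the remainder is the routine Kronecker algebra above.
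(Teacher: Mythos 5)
Your core argument is exactly the paper's: write ${\bf L}+\frac{1}{mn}J_{mn}=A\otimes I_n+B\otimes J_n$, posit an inverse of the form $\mathcal V\otimes I_n+\mathcal Z\otimes J_n$, expand via $J_n^2=nJ_n$, and match coefficients to get $A\mathcal V=I_m$ and $(A+nB)\mathcal Z=-B\mathcal V$; your diagonal-dominance argument for the invertibility of $\mathcal W$ is also the paper's. (Your aside about the factors ``commuting'' is unnecessary: for square matrices a one-sided inverse is automatically two-sided.)

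One concrete flaw in the extra step you add: you justify invertibility of $\mathcal W+nB$ by applying Sherman--Morrison to $\mathcal W-n\,\text{diag}(w_{i,i})$, which you call strictly diagonally dominant and hence invertible. It is neither: its $i$-th diagonal entry is $\sum_{j\neq i}w_{i,j}$ and its off-diagonal entries are $-w_{i,j}$, so every row sums to zero --- it is the combinatorial Laplacian of the complete weighted graph on $m$ vertices and is singular, with ${\bf 1}_m$ in its kernel. The correct route is to observe that $\mathcal W+nB=L_m+\frac{1}{m}J_m$ with $L_m$ that Laplacian; since the $w_{i,j}$ are positive the graph is connected, $L_m$ is positive semidefinite with kernel exactly $\mathrm{span}({\bf 1}_m)$, and adding $\frac{1}{m}J_m$ (which is positive definite on that kernel) makes the sum positive definite, hence invertible --- the same device the paper uses in Proposition \ref{prop:pseudoinv} for ${\bf L}+\frac{1}{mn}J_{mn}$ itself. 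The paper does not address this invertibility at all, so your instinct to supply it is good; only the justification needs repair.
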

\begin{proof}
First note that the assumption on $\{w_{i,i}\}_{i=1}^m$ assures that $\mathcal W$ is strictly diagonally dominant and is therefore invertible.
If the proposed form, $({\bf L}+\frac{1}{mn}J_{mn})^{-1}=\mathcal{V}\otimes I_n+\mathcal{Z}\otimes J_n,$ is correct then 
\begin{align*}
I_{mn}&=\left({\bf L}+\frac{1}{mn}J_{mn}\right)\left({\bf L}+\frac{1}{mn}J_{mn}\right)^{-1}\\
&=\left(\mathcal W\otimes I_n+\left(\frac{1}{mn}J_{m}-\text{diag}(w_{i,i})\right)\otimes J_n\right)\left(\mathcal{V}\otimes I_n+\mathcal{Z}\otimes J_n,\right)\\
&=(\mathcal{W}\mathcal{V})\otimes I_n+\left(\left(\mathcal{W}+n\left(\frac{1}{mn}J_{m}-\text{diag}(w_{i,i})\right)\right)\mathcal{Z}+\left(\frac{1}{mn}J_{m}-\text{diag}(w_{i,i})\right)\mathcal{V}\right)\otimes J_n
\end{align*}
From this, the desired forms of $\mathcal{V}$ and $\mathcal{Z}$ follow immediately.
\end{proof}

From Theorem \ref{thm:1}, the following Corollary is immediate:
\begin{corollary}
\label{cor:1}
With notation as above, let $\bW$ be a weight matrix of the form of Eq. (\ref{eq:W2}), and assume that $w_{i,i}>0$ for all $i\in\{1,2,\ldots,m\}$.  Let $\bl$ be the combinatorial Laplacian of $\bW$, then
$$\bl^\dagger=\mathcal{W}^{-1}\otimes I_n+\left[-\left(\mathcal{W}+n\left(\frac{J_m}{mn}-\text{diag}(w_{i,i})\right)\right)^{-1}\left(\frac{J_m}{mn}-\text{diag}(w_{i,i})\right)\mathcal{W}^{-1}-\frac{J_m}{mn} \right]\otimes J_n.$$
\end{corollary}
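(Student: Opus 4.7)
The plan is to read Corollary \ref{cor:1} as an essentially arithmetic consequence of Proposition \ref{prop:pseudoinv} together with Theorem \ref{thm:1}, and the only non-bookkeeping task is to rewrite $\frac{1}{mn}J_{mn}$ in a form compatible with the Kronecker-sum decomposition of $\bl + \frac{1}{mn}J_{mn}$ used in Theorem \ref{thm:1}.

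First I would invoke Proposition \ref{prop:pseudoinv} to write
\[
\bl^\dagger = \left(\bl + \tfrac{1}{mn}J_{mn}\right)^{-1} - \tfrac{1}{mn}J_{mn}.
\]
Next I would apply Theorem \ref{thm:1} directly to the first term on the right-hand side, which yields
\[
\left(\bl + \tfrac{1}{mn}J_{mn}\right)^{-1} = \mathcal{W}^{-1}\otimes I_n + \mathcal{Z}\otimes J_n,
\]
with $\mathcal{Z}$ as defined in the statement of Theorem \ref{thm:1}. The remaining step is to absorb $-\tfrac{1}{mn}J_{mn}$ into the $J_n$ block: since $J_{mn} = J_m \otimes J_n$, we have $\tfrac{1}{mn}J_{mn} = \tfrac{J_m}{mn}\otimes J_n$, so
\[
\bl^\dagger = \mathcal{W}^{-1}\otimes I_n + \left(\mathcal{Z} - \tfrac{J_m}{mn}\right)\otimes J_n.
\]
Substituting the formula for $\mathcal{Z}$ from Theorem \ref{thm:1} gives exactly the expression stated in the corollary.

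There is effectively no obstacle here; the one thing worth stating explicitly is the identity $J_{mn} = J_m \otimes J_n$, without which the correction term would not combine cleanly with the $\otimes J_n$ summand from Theorem \ref{thm:1}. The hypothesis $w_{i,i} > 0$ is inherited from Theorem \ref{thm:1} and guarantees invertibility of $\mathcal{W}$ and of $\mathcal{W} + n(\tfrac{J_m}{mn} - \text{diag}(w_{i,i}))$ (the latter amounts to $\bl + \tfrac{1}{mn}J_{mn}$ being invertible on $\mathbb{R}^{mn}$, since $\bl$ is positive semidefinite with null space spanned by $\mathbf{1}_{mn}$ and $\tfrac{1}{mn}J_{mn}$ is the orthogonal projector onto that null space). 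So the whole argument is a one-line chain: Proposition \ref{prop:pseudoinv}, then Theorem \ref{thm:1}, then the Kronecker identity $J_{mn}=J_m\otimes J_n$.
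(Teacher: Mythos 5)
Your proposal is correct and matches the paper's route exactly: the paper presents Corollary \ref{cor:1} as an immediate consequence of Proposition \ref{prop:pseudoinv} and Theorem \ref{thm:1}, and the steps you spell out (including the identity $J_{mn}=J_m\otimes J_n$ needed to fold $-\tfrac{1}{mn}J_{mn}$ into the $\otimes J_n$ term) are precisely the ones left implicit there. No gaps.
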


\bibliographystyle{asa}
\bibliography{refbib}
\end{document}